\theoremstyle{definition}
\newtheorem{theorem}{Theorem}
\newtheorem{definition}[theorem]{Definition}
\newtheorem{lemma}[theorem]{Lemma}
\setlist[enumerate]{leftmargin=0.5cm,topsep=0pt,itemsep=-2pt}
\setlist[itemize]{leftmargin=0.5cm,topsep=0pt,itemsep=-2pt}
\definecolor{hexcolor0xbfbfbf}{rgb}{0.749,0.749,0.749}
\tikzset{>=latex}
\tikzstyle{none}   = [inner sep=0pt]
\tikzstyle{line}   = [ -, thick, shorten <=1pt, shorten >=1pt ]
\tikzstyle{arrow}  = [ ->, thick, shorten <=1pt, shorten >=1pt ]
\tikzstyle{ardash} = [ dashed, ->, thick, shorten <=1pt, shorten >=1pt ]
\tikzstyle{empty}=[circle,opacity=0.0,text opacity=1.0,inner sep=0pt]
\tikzstyle{box}=[rectangle,fill=White,draw=Black]
\tikzstyle{filled}=[circle,thick,fill=hexcolor0xbfbfbf,draw=Black]
\tikzstyle{hollow}=[circle,thick,fill=White,draw=Black]
\tikzstyle{param}=[rectangle,fill=Black,draw=Black,inner sep=0pt,minimum width=4pt,minimum height=4pt]
\tikzstyle{paramhollow}=[rectangle,thick,fill=White,draw=Black,inner sep=0pt,minimum
\icmltitlerunning{Beyond Verifiable Rewards: Scaling Reinforcement Learning for Language Models to Unverifiable Data}
\begin{document}

\twocolumn[
\icmltitle{Beyond Verifiable Rewards:\\ Scaling Reinforcement Learning for Language Models to Unverifiable Data}

\begin{icmlauthorlist}
\icmlauthor{Yunhao Tang}{meta1}
\icmlauthor{Sid Wang}{meta1}
\icmlauthor{Lovish Madaan}{meta1,ucl1}
\icmlauthor{R\'emi Munos}{meta2}
\end{icmlauthorlist}
\icmlaffiliation{meta1}{Meta GenAI}
\icmlaffiliation{meta2}{Meta FAIR}
\icmlaffiliation{ucl1}{University College London}

\icmlkeywords{Machine Learning, ICML}

\vskip 0.025\linewidth
]

\printAffiliationsAndNotice{} 

\begin{abstract}
We propose to scale RL to unverifiable data with a novel algorithm JEPO (\textbf{J}ensen's \textbf{E}vidence lower bound \textbf{P}olicy \textbf{O}ptimization). While most prior efforts on scaling RL for LLMs focus on verifiable data where ground truth answers are typically short-form and can be matched easily; we investigate the case where such assumptions are less valid (e.g., when answers are long-form such as mathematical proofs). To scale RL training to unverifiable data with contemporary training constraints, we propose JEPO. JEPO applies Jensen's evidence lower bound, a pragmatic simplification of the evidence lower bound which views chain-of-thought as a latent variable in the generative process. We show that on verifiable data (math), JEPO is as effective as RL with verifiable rewards; on semi-verifiable data (numina), JEPO improves on soft-match based evaluations compared to RL with verifiable rewards which can only leverage a subset of the data source; finally, on unverifiable data (numina-proof), JEPO outperforms SFT and a few ablation baselines on likelihood evaluations.
\end{abstract}

\section{Introduction}

Reinforcement learning from verifiable rewards (RLVR) has proved effective at endowing language models with capabilities beyond canonical pre-training and supervised fine-tuning \citep{jaech2024openai,shao2024deepseekmath,lambert2024t,guo2025deepseek,team2025kimi,su2025expanding}. At its core, reinforcement learning (RL) allows for the optimization of chain-of-thought at scale, which elicits significant performance improvements especially for reasoning intensive tasks \citep{ling2017program,wei2022chain}. In the case of mathematical reasoning, it encourages step-by-step solutions that lead up to a final answer \citep{cobbe2021training,lightman2023let}, where correctness can be verified to produce a reward signal for RL training.

However, a main limitation of current RLVR is the data source: verifiable rewards are mostly derived from datasets where ground truth answers are short-form and can be checked in relatively easy ways \citep{guo2025deepseek,team2025kimi,su2025expanding}. For example, most answers to popular benchmarks are integers and short expressions \citep{hendrycks2021measuring,aime_1983_2024}. This practical limitation makes it hard to scale RL to more general datasets where answer correctness is hard to check. For instance, for long-form mathematical data where the answer is the whole proof, its inherent correctness is hard to assess without expert human evaluations \citep{petrov2025proof}.

\begin{figure*}[t]
\centering
\includegraphics[width=7in]{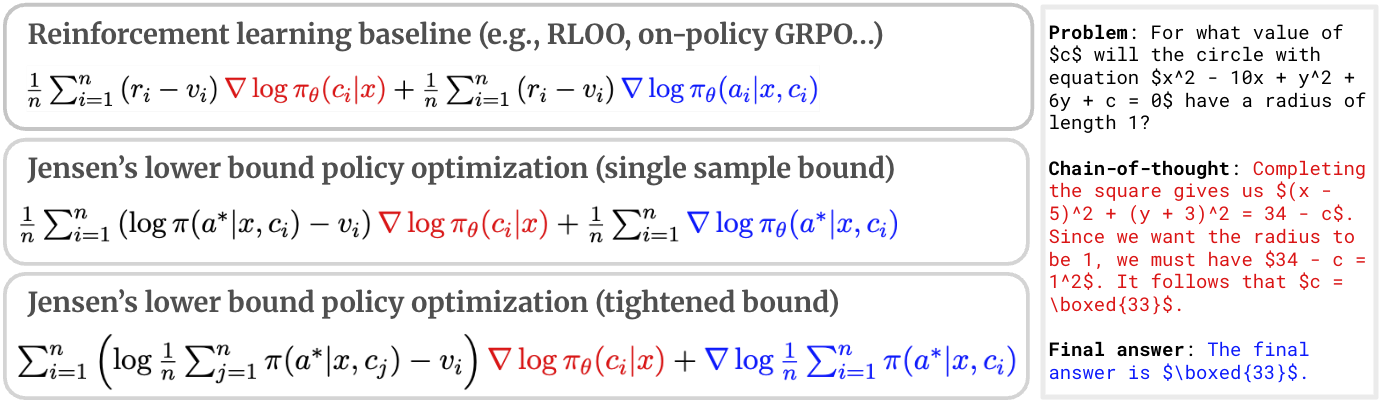}
\caption{\small{A canonical RL algorithm updates both its chain-of-thought policy $\pi_\theta(c|x)$ and the final conclusion $\pi_\theta(a|x,c)$ with advantage function computed from reward $r_i$ and an optional baseline $v_i$. JEPO has similar counterparts: updating the chain-of-thought policy using likelihood scores as the effective reward, and updating the answer policy using a supervised loss. Unlike RL baselines, JEPO does not require access to a reward $r_i$ but only access to a ground truth answer $a^*$. Due to the implementation-level similarity between JEPO and RL, it is straightforward to incorporate JEPO into existing stacks of large-scale RL training. We use the same baseline notation for the RL and JEPO loss, though they differ in practice. In general $v_i$ can be a leave-one-out control variate that is computed from other $n-1$ samples in the batch.}}
\label{figure:algorithm}
\end{figure*}

 The boundary between verifiable and unverfiable data, though often blurry in practice, can be made more actionable: we define data as unverifiable, if its ground truth answer cannot be verified with a reasonably simple automatic procedure. Naturally, it is of interest to scale RL to such data sources, for a few notable reasons: (1) some data have inherently long answers which cannot be cast into short-form answers in a straightforward way; (2) data sources with long-form answers exist in abundance, and it is sub-optimal not to leverage such data for training. In this work, we seek to tackle the problem of scaling RL to unverifiable data.

We propose JEPO (\textbf{J}ensen's \textbf{E}vidence lower bound \textbf{P}olicy \textbf{O}ptimization), a novel RL algorithm that can equally post-train on verifiable or unverifiable data. The design of the algorithm is inspired by a latent variable view of chain-of-thought \citep{hoffman2024training,hu2024unveiling}. As a major algorithmic innovation, contrast to prior work, we make use of \emph{Jensen's evidence lower bound}, a novel pragmatic simplification of the full evidence lower bound \citep{blei2006variational,blei2017variational} named after Jensen's inequality \citep{jensen1906fonctions}. Optimizing such a simplified objective forgoes the need of training expensive auxiliary models, making JEPO more suitable for contemporary large-scale training \citep{brown2020language,achiam2023gpt}. 

The final algorithm consists of an hybrid RL and supervised learning loss. As a major advantage over online RL baselines, JEPO does not require any external verifiable reward, lifting the requirement that ground truth be easily verifiable. JEPO also shares much implementation-level similarity with online RL algorithms, making it easy to integrate into an existing large-scale workflow. See Figure~\ref{figure:algorithm} for a visual depiction of the similarity and difference between JEPO and RL baselines. In more details, our technical contributions are as follows:
\begin{itemize}
    \item (\textbf{Algorithm}) Followed by a brief background on latent variable modeling, we derive the Jensen's evidence lower bound in Section~\ref{sec:algo}. In Section~\ref{sec:algo-multi}, we show how its multi-sample extension \citep{burda2015importance} tightens the theoretical bound and alludes to better performance in practice. For all objectives, we derive stochastic optimization algorithms that can be practically implemented.
    \item (\textbf{Theoretical connections}) We draw insightful connections between the full ELBO, RL and JEPO in Section~\ref{sec:connections}. We discuss a few different connections of interest to readers from different backgrounds, such as the practical trade-offs of RL vs. JEPO. See Figure~\ref{figure:cotgraph} for an illustration of the graphical models connecting JEPO and probabilistic inference.
    \item (\textbf{Implementation}) In Section~\ref{sec:implement}, we highlight practical implementation details that make JEPO work the best, highlighting the fact that the resulting algorithm takes a similar form to common RL algorithms for LLM. This means that JEPO is easy to integrate into an existing workflow. See Figure~\ref{figure:algorithm} for a summarized comparison. 
    \item (\textbf{Experiments}) Finally in Section~\ref{sec:exp-verifiable}, Section~\ref{sec:exp-semi} and Section~\ref{sec:exp-unverifiable}, we show that for verifiable data, JEPO is competitive compared to online RL with verifiable reward. For semi-verifiable and unverifiable data, JEPO has performance advantage over online RL, SFT or other ablation baselines. As a by-product, we showcase the utility of generating chain-of-thought for long-form proofs, an observation that is interesting in its own right.
\end{itemize}

\section{Reinforcement learning for language models}

A language model can be understood as a policy $\pi_\theta$ in the context of reinforcement learning. Given a prompt $x$, the policy generates a response $y$, which then gets assessed by a human user. Usually, the objective is to optimize $\pi_\theta$ such that certain reward function $r(x,y)$ that captures human preference is maximized \citep{christiano2017deep,ouyang2022training}. Formally, consider the maximization problem
\begin{align}
    \max_\theta \mathbb{E}_{y\sim \pi_\theta(\cdot|x)}\left[r(x,y)\right] -\beta \mathbb{KL}\left(\pi_\theta(\cdot|x),\pi_\text{ref}(\cdot|x)\right) \label{eq:rlhf}
\end{align}
with a KL regularization that encourages $\pi_\theta$ to stay close to the reference policy. The reward $r(x,y)$ captures the human preference of response $y$ in response to prompt $x$ and can take various forms: for example, it can be extracted from human annotations \citep{christiano2017deep,ziegler2019fine,ouyang2022training}, computed using automatic feedback such as code execution \citep{gehring2024rlef,wei2025swe}. We focus on a specialized setting where the reward is derived from access to a certain \emph{ground truth} of the problem.

\subsection{RL from ground truth feedback}

We focus on applications where the prompt $x$ typically specifies a question and there is an example of a desirable ground truth $a^\ast$. Such a formulation is applicable to mathematical reasoning \citep{hendrycks2021measuring,uesato2022solving,lightman2023let} where $x$ is a question and $a^\ast$ is the ground truth answer. When the correctness of the model generated answer $a$ can be easily verified against the ground truth $a^*$, a verifiable reward $r$ is available by matching $a^\ast$ against the answer $a$. As another example, when $a^*$ is a long-form proof, such a reward is not immediately available and such cases are considered less verifiable.

In broader context, RLVR also includes code applications where the reward is computed via unit tests \citep{gehring2024rlef,wei2025swe}. We do not consider such use cases.

\subsection{Chain-of-thought}

For aforementioned applications where the model is required to reason about the question $x$ and generate an answer $a$, we get the model to generate chain-of-thoughts - a sequence of reasoning steps $c$ leading up to the final conclusion \citep{ling2017program,wei2022chain}. Henceforth, we can decompose the generation $y=(c,a)$ into a chain-of-thought $c$ and an answer $a$. The generative process for the response $y\sim \pi_\theta(\cdot|x)$ is made more concrete as
\begin{align}
    c\sim \pi_\theta(\cdot|x), a\sim \pi_\theta(\cdot|x,c).\label{eq:cot-generative}
\end{align}

Given a prompt $x$, the intuitive role of chain-of-thought is such that it makes the \emph{marginal} likelihood of the ground truth answer $a^\ast$ higher. As such, we can interpret chain-of-thought as a latent variable and formulate the optimization of chain-of-thought as latent variable modeling  \citep{hu2024unveiling,hoffman2024training}.

\begin{figure*}[!t]
\centering
\subcaptionbox{\small{Probabilistic inference}}[.25\linewidth]{
\begin{tikzpicture}
	\begin{pgfonlayer}{nodelayer}
		\node [style=filled] (0) at (0, 0) {$o$};
		\node [style=hollow] (1) at (0, 1.5) {$z$};
		\node [style=box] (4) at (-1.2, 1.5) {$\theta$};
		\node [style=box] (5) at (1.2, 1.5) {$\phi$};
		\node [style=empty] (10) at (0, 0.66) {};
		\node [style=empty] (11) at (-0.5, 0.66) {};
		\node [style=empty] (12) at (0.5, 0.66) {};
	\end{pgfonlayer}
	\begin{pgfonlayer}{edgelayer}
		\draw [style=arrow] (1) to (0);
		\draw [style=arrow] (4) to (1);
		\draw [style=arrow] (4) to (0);
		\draw [style=arrow][bend right=60,dashed] (0) to (1);
		\draw [style=arrow][dashed] (5) to (1);
	\end{pgfonlayer};
\end{tikzpicture}
}
\subcaptionbox{\small{CoT with full ELBO}}[.23\linewidth]{
\begin{tikzpicture}
	\begin{pgfonlayer}{nodelayer}
		\node [style=filled] (0) at (0, 0) {$a^*$};
		\node [style=hollow] (1) at (0, 1.5) {$c$};
		\node [style=box] (4) at (-1.2, 1.5) {$\theta$};
		\node [style=box] (5) at (1.2, 1.5) {$\phi$};
		\node [style=empty] (10) at (0, 0.66) {};
		\node [style=empty] (11) at (-0.5, 0.66) {};
		\node [style=empty] (12) at (0.5, 0.66) {};
	\end{pgfonlayer}
	\begin{pgfonlayer}{edgelayer}
		\draw [style=arrow] (1) to (0);
		\draw [style=arrow] (4) to (1);
		\draw [style=arrow] (4) to (0);
		\draw [style=arrow][bend right=60,dashed] (0) to (1);
		\draw [style=arrow][dashed] (5) to (1);
	\end{pgfonlayer};
\end{tikzpicture}
}
\subcaptionbox{\small{CoT with Jensen's bound}}[.23\linewidth]{
\begin{tikzpicture}
	\begin{pgfonlayer}{nodelayer}
		\node [style=filled] (0) at (0, 0) {$a^*$};
		\node [style=hollow] (1) at (0, 1.5) {$c$};
		\node [style=box] (4) at (-1.2, 1.5) {$\theta$};
		\node [style=box] (5) at (1.2, 1.5) {$\theta$};
		\node [style=empty] (10) at (0, 0.66) {};
		\node [style=empty] (11) at (-0.5, 0.66) {};
		\node [style=empty] (12) at (0.5, 0.66) {};
	\end{pgfonlayer}
	\begin{pgfonlayer}{edgelayer}
		\draw [style=arrow] (1) to (0);
		\draw [style=arrow] (4) to (1);
		\draw [style=arrow] (4) to (0);
		\draw [style=arrow][dashed] (5) to (1);
	\end{pgfonlayer};
\end{tikzpicture}
}
\subcaptionbox{\small{CoT with Jensen's bound with KL regularization}}[.23\linewidth]{
\begin{tikzpicture}
	\begin{pgfonlayer}{nodelayer}
		\node [style=filled] (0) at (0, 0) {$a^*$};
		\node [style=hollow] (1) at (0, 1.5) {$c$};
		\node [style=box] (4) at (-1.2, 1.5) {$\theta_\text{ref}$};
		\node [style=box] (5) at (1.2, 1.5) {$\theta$};
		\node [style=empty] (10) at (0, 0.66) {};
		\node [style=empty] (11) at (-0.5, 0.66) {};
		\node [style=empty] (12) at (0.5, 0.66) {};
	\end{pgfonlayer}
	\begin{pgfonlayer}{edgelayer}
		\draw [style=arrow] (1) to (0);
		\draw [style=arrow] (4) to (1);
		\draw [style=arrow][dashed] (5) to (1);
		\draw [style=arrow] (5) to (0);
	\end{pgfonlayer};
\end{tikzpicture}
}
\caption{\small{Graphical models for various algorithmic formulations discussed in this work. Solid lines represent generative models and dashed lines represent inference models. Circles represent random variables and squares represent parameters. Shading indicates that the random variable is observed, and is used for providing feedback for the learning process. For CoT optimization, $a^*$ is a simplified notation for the binary optimality variable $\mathds{1}_{\{a=a^\ast\}}$ from the random variable $a$. See Appendix~\ref{appendix:graph} for a more detailed explanation.}}
\label{figure:cotgraph}
\end{figure*}
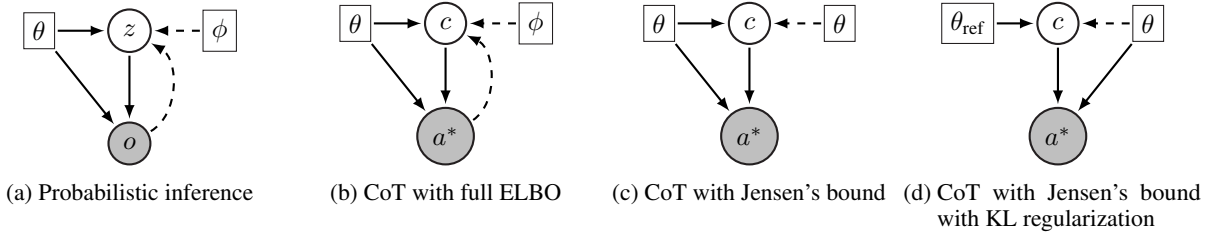

\section{Jensen's lower bound for chain-of-thought as latent variable modeling}\label{sec:algo}

We start with the initial motivation to increase the marginal likelihood of the ground truth answer $a^\ast$ (i.e., the evidence) given the generative process in Eqn~\eqref{eq:cot-generative}
\begin{align}
    \max_\theta \log \pi_\theta(a^\ast|x).\label{eq:likelihood-objective}
\end{align}
Directly optimizing the log likelihood 
is not tractable because its gradient cannot be estimated via samples in an unbiased way (see, e.g., discussion on this in the probabilistic inference literature \citep{blei2017variational}). As the main contribution of this work, we propose a tractable lower bound objective by directly applying the Jensen inequality to lower bound the log likelihood
\begin{align}
    \log \pi_\theta(a^\ast|x) &= \log \mathbb{E}_{c\sim \pi_\theta(\cdot|x)}\left[\pi_\theta(a^\ast|x,c)\right] \notag \\
    &\geq \underbrace{\mathbb{E}_{c\sim \pi_\theta(\cdot|x)}\left[\log 
\pi_\theta(a^\ast|x,c)\right]}_{\mathcal{L}_\theta(x,a^\ast)},\label{eq:simple-lower-bound}
\end{align}
where we exchange the order of the concave $\log$ function  and expectation $\mathbb{E}\left[\cdot\right]$. There are conditions under which the lower bound $\mathcal{L}_\theta(x,a^\ast)$ is tight. For example, if all chain of thoughts $c$ in the support of $\pi_\theta(\cdot|x)$ induce the same probability of predicting the ground truth answer $\pi_\theta(a^\ast|x,c)$, i.e., $\pi_\theta(a^\ast|x,c)=\pi_\theta(a^\ast|x,c'),\forall c,c' \in \text{supp}\left(\pi_\theta(\cdot|x)\right)$. In practice when the optimization is approximate, such conditions are not likely to hold. As a result, there might be a gap between the lower bound and $\log \pi_\theta(a^\ast|x)$ and we will examine its empirical impact in practice.

The gap between the marginal log likelihood and the lower bound can be expressed as the KL divergence between $\pi_\theta$ and the posterior distribution \citep{blei2017variational}
\begin{align*}
    \log \pi_\theta(a^\ast|x) - \mathcal{L}_\theta(x,a^\ast) = \mathbb{KL}\left(\pi_\theta(\cdot|x),p^{\pi_\theta}(\cdot|x,a^\ast)\right) \geq 0,
\end{align*}
where $p^{\pi_\theta}(c|x,a^\ast)\coloneqq \frac{\pi_\theta(a^\ast|x,c)\pi_\theta(c|x)}{\sum_{c'}\pi_\theta(a^\ast|x,c')\pi_\theta(c'|x)}$ is the posterior, which defines a distribution over chain-of-thought given the \emph{prior} $\pi_\theta(c|x)$ and the \emph{likelihood} $\pi_\theta(a^*|x,c)$. For readers familiar with the probabilistic inference literature. The lower bound $\mathcal{L}_\theta(x,a^*)$ is closely related to the evidence lower bound  \citep{kingma2013,blei2017variational}, which we will elaborate more in Section~\ref{sec:connections}.

\subsection{Stochastic gradient estimate}

The lower bound permits stochastic gradient estimates. Concretely, given samples from the current policy $c\sim \pi_\theta(\cdot|x)$, we can construct an estimate of $\nabla_\theta\mathcal{L}_\theta(x,a^\ast)$ as 
\begin{align}
    \underbrace{\log 
 \pi_\theta(a^\ast|x,c) \nabla_\theta  \log \pi_\theta(c|x)}_{g_1} + \underbrace{\nabla_\theta \log \pi_\theta(a^\ast|x,c)}_{g_2}. \label{eq:gradient}
\end{align}
The gradient has two terms: $g_1$ is a REINFORCE gradient estimate with $\log 
 \pi_\theta(a^\ast|x,c)$ as the reward function for sampled chain-of-thought $c$ \citep{william1933}. The second gradient $g_2$ is reminiscent of a supervised learning loss that encourages the model to predict ground truth answer $a^\ast$ given sampled chain-of-thought $c$.

In practice, we can add a control variate to the REINFORCE gradient estimate to reduce variance. One option is to learn a prompt-answer dependent function  \citep{schulman2017}; another sample-based alternative is to generate $n$ i.i.d. chain-of-thoughts in parallel $c_i\sim \pi_\theta(\cdot|x)$, and construct leave-one-out control variates $v_i=\frac{1}{n-1}\sum_{j\neq i} \log \pi_\theta(a^\ast|x,c_j)$ \citep{mnih2016variational,kool2019buy,tang2025optimizing}. The overall gradient estimate is the average over $n$ samples:
\begin{align}
& \frac 1n \sum_{i=1}^n \Big[
 \left(\log \pi_\theta(a^\ast|x,c_i) - v_i\right) \nabla_\theta  \log \pi_\theta(c_i|x)\Big] \notag \\
&+  \frac 1n \sum_{i=1}^n \Big[ \nabla_\theta \log \pi_\theta(a^\ast|x,c_i)\Big].\label{eq:lower-bound-gradient}
\end{align}

Note the control variates $v_i$s do not introduce any bias to the gradient estimate since they are statistically independent from $\nabla_\theta \log \pi_\theta(c_i|x)$ and $\log \pi_\theta(a^*|x,c_i)$.

\paragraph{Connections to supervised fine-tuning} In the very special case where there is no chain-of-thought, the gradient estimate reduces to just the SFT part $
    \nabla_\theta \log \pi_\theta(a^\ast|x)$
which is effectively the supervised fine-tuning loss from prompt $x$ to answer $a^\ast$. Here, the key difference is that the loss $\pi_\theta(a^*|x,c_i)$ further conditions on the chain-of-thoughts $c_i$'s whose distribution changes over time and introduces more diversity to the optimization process. 

\section{Improving the objective via multi-sample Jensen's lower bound}\label{sec:algo-multi}

A loose lower bound induces a sizable discrepancy from the true objective of interest. A similarly simple yet tighter lower bound can be obtained with multiple samples \citep{burda2015importance}. Indeed, consider the $n$-sample lower bound
\begin{align}
\mathcal{L}_\theta^{(n)}(x,a^\ast)\coloneqq \mathbb{E}_{(c_i)_{i=1}^n\sim \pi_\theta(\cdot|x)} \! \left[  \log  \! \left(  \!
\frac{1}{n}\sum_{i=1}^n \pi_\theta(a^\ast|x,c_i) \! \right) \! \right] \! . \label{eq:simple-lower-bound-n}
\end{align}
Note that the $\log$ function is outside of the $n$-sample average to tighten the bound.
It is straightforward to verify that $\mathcal{L}_\theta^{(1)}(x,a^\ast)$ recovers the Jensen's lower bound as defined before in Eqn~\eqref{eq:simple-lower-bound}. As shown in \citet{burda2015importance}, the lower bound becomes tighter as $n$ increases $
   \mathcal{L}_\theta^{(n)}(x,a^\ast) \leq \mathcal{L}_\theta^{(n+1)}(x,a^\ast)$ for any $n\geq 0$. As $n\rightarrow\infty$, the bound approaches the marginal likelihood $\mathcal{L}_\theta^{(n)}(x,a^\ast)\rightarrow \log \pi_\theta(a^\ast|x)$, which is the ultimate objective of interest, under certain regularity conditions on $\pi_\theta$. 

To maximize the multi-sample lower bound $\mathcal{L}_\theta^{(n)}(x,a^*)$ with gradient ascent, we can construct a multi-sample stochastic gradient estimate as follows, 
\begin{align}
    &\underbrace{\sum_{i=1}^n \log \left(\frac{1}{n}\sum_{j=1}^n \pi_\theta(a^\ast|x,c_j)\right) \cdot \nabla_\theta \log \pi_\theta(c_i|x)}_{g_1^{(n)}} \notag \\
    &+ 
     \underbrace{\nabla_\theta  \log \frac{1}{n}\sum_{i=1}^n  \pi_\theta(a^\ast|x,c_i)}_{g_2^{(n)}}. \label{eq:multisample-gradient}
\end{align}
Empirically, the first term $g_1^{(n)}$ tends to have high variance as $n$ increases \citep{rainforth2018tighter}, since the objective $\log \frac{1}{n}\sum_{j=1}^n \pi_\theta(a^\ast|x,c_j)$ correlates updates to all $n$ samples. As a result, a key difference from the single-sample case is that the update is no longer an average over $n$ samples \citep{tang2025optimizing}. Akin to before, we can introduce the leave-one-out control variate without incurring any bias for variance reduction \citep{mnih2016variational,kool2019buy} with $\tilde{v}_i =\log \frac{1}{n-1}\sum_{j \neq i} \pi_\theta(a^\ast|x,c_j)$,
\begin{align*}
    \sum_{i=1}^n \left(\log \left(\frac{1}{n}\sum_{j=1}^n \pi_\theta(a^\ast|x,c_j)\right) - \tilde{v}_i\right) \cdot \nabla_\theta \log \pi_\theta(c_i|x).
\end{align*}

Note that the second term $g_2^{(n)}$, though can be estimated via random samples, is unlike a regular SFT loss. The key difference is that it is the log average of multiple probabilities, instead of the average of log probabilities as in the regular SFT loss. As $n\rightarrow\infty$, since $\log \frac{1}{n}\sum_{i=1}^n \pi_\theta(a^*|x,c_i)\rightarrow\log \pi_\theta(a^*|x)$, we see that conceptually $g_2^{(n)}$ can be understood as directly maximizing the marginal likelihood. In other words, the objective averages over multiple probabilities, which essentially marginalizes the chain-of-thought conditional distribution.

As we will show in Section~\ref{sec:exp-verifiable}, multi-sample lower bound generally improves the single-sample lower bound. This means that tightened lower bounds improve training objectives both in theory and in practice.

\section{Connections to algorithmic alternatives}\label{sec:connections}

The lower bound objectives bear close connections to a number of algorithmic alternatives, which we discuss below. See Algorithm~\ref{algo:online} for the pseudocode of the full algorithm, which we henceforth call JEPO.

\subsection{Evidence lower bound}

The evidence lower bounds (ELBO) \citep{blei2006variational,kingma2013,burda2015importance} controls for the tightness of the lower bound with an inference distribution $q_\phi(c|x,a^\ast)$ which defines a distribution over chain-of-thoughts. ELBO is usually written as follows
\begin{align}
    \mathcal{L}_{\theta,\phi}(x,a^\ast) = \mathbb{E}_{c}\left[\log \pi_\theta(a^\ast|x,c) - \log \frac{q_\phi(c|x,a^*)}{\pi_\theta(c|x)}\right],\label{eq:eval}
\end{align}
where the expectation is under $c\sim q_\phi(\cdot|x,a^\ast)$.
ELBO lower bounds the marginal log likelihood $\mathcal{L}_{\theta,\phi}(x,a^\ast)\leq \log \pi_\theta(a^\ast|x)$ and it is tight if and only if the inference distribution equals the posterior distribution $q_\phi(c|x,a^\ast)=p^{\pi_\theta}(c|x,a^\ast)$. Since ELBO is a function of both the policy parameter $\theta$ and inference distribution parameter $\phi$, given a chain-of-thought sample $c\sim q_\phi(\cdot|x,a^\ast)$, we can optimize both with stochastic gradient estimates:
\begin{align*}
    g_\theta &= \nabla_\theta \log \pi_\theta(a^\ast|x,c) + \nabla_\theta \log \pi_\theta(c|x),\\
    g_\phi &= \nabla_\phi \log q_\phi(c|x,a^\ast)\left(\log \pi_\theta(a^\ast|x,c) -\log \frac{q_\phi(c|x,a^*)}{\pi_\theta(c|x)}\right) \\
    &\ \ -\nabla_\phi \log q_\phi(c|x,a^*).
\end{align*}
Juxtaposing the form of the gradient here and the gradient to the Jensen's lower bound defined in Eqn~\eqref{eq:gradient}, we observe that the inference distribution gradient $g_\phi$ bears resemblance to the REINFORCE gradient; while the policy distribution gradient $g_\theta$ bears resemblance to the SFT gradient. In fact, we can show that under the special parameterization $q_\phi(c|x,a^\ast)\coloneqq\pi_\theta(c|x)$, the two gradients are exactly equivalent. More formally, we have the following.

\begin{lemma} (\textbf{Jensen's lower bound as a special case of ELBO}) \label{lemma:elbo-special-case}
    When $q_\phi(c|x,a^\ast)\coloneqq\pi_\theta(c|x)$, ELBO is equivalent to the Jensen's lower bound $\mathcal{L}_{\theta,\phi}(x,a^\ast)=\mathcal{L}_{\theta}(x,a^\ast)$ stochastic gradient estimates.
\end{lemma}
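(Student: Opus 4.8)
The plan is to establish the equivalence in two stages under the substitution $q_\phi(c|x,a^\ast)\coloneqq\pi_\theta(c|x)$: first at the level of the objective, and then at the level of the stochastic gradient estimate.

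For the objective, I would substitute $q_\phi=\pi_\theta$ directly into the ELBO of Eqn~\eqref{eq:eval}. The correction term collapses, $\log \frac{q_\phi(c|x,a^*)}{\pi_\theta(c|x)} = \log 1 = 0$, and the sampling distribution $c\sim q_\phi(\cdot|x,a^\ast)$ becomes $c\sim \pi_\theta(\cdot|x)$. What remains is $\mathbb{E}_{c\sim\pi_\theta(\cdot|x)}[\log \pi_\theta(a^\ast|x,c)]$, which is exactly $\mathcal{L}_\theta(x,a^\ast)$ as defined in Eqn~\eqref{eq:simple-lower-bound}. This part is immediate.

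For the gradient, the key conceptual step is to recognize that tying $q_\phi\equiv\pi_\theta$ forces the inference parameters to coincide with the policy parameters, $\phi=\theta$, so the tied objective $\tilde{\mathcal{L}}(\theta)\coloneqq\mathcal{L}_{\theta,\theta}(x,a^\ast)$ is a function of $\theta$ alone; by the chain rule its gradient estimator is the sum $g_\theta+g_\phi$ evaluated at the same sample $c\sim\pi_\theta(\cdot|x)$. I would then substitute $q_\phi=\pi_\theta$ into both estimators from Section~\ref{sec:connections}: in $g_\phi$ the score $\nabla_\phi \log q_\phi(c|x,a^\ast)$ becomes $\nabla_\theta \log \pi_\theta(c|x)$ and the log-ratio again vanishes, so $g_\phi = \log \pi_\theta(a^\ast|x,c)\,\nabla_\theta \log \pi_\theta(c|x) - \nabla_\theta \log \pi_\theta(c|x)$, while $g_\theta = \nabla_\theta \log \pi_\theta(a^\ast|x,c) + \nabla_\theta \log \pi_\theta(c|x)$. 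Adding them, the term $+\nabla_\theta \log \pi_\theta(c|x)$ in $g_\theta$ cancels the correction $-\nabla_\theta \log \pi_\theta(c|x)$ in $g_\phi$, leaving exactly $\log \pi_\theta(a^\ast|x,c)\,\nabla_\theta \log \pi_\theta(c|x) + \nabla_\theta \log \pi_\theta(a^\ast|x,c) = g_1 + g_2$, the Jensen gradient of Eqn~\eqref{eq:gradient}.

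The main obstacle is bookkeeping rather than any deep argument. One must correctly apply the total derivative through the tied parameters, so that the sampling-distribution score term is attributed to $g_\phi$ and not to $g_\theta$, and one must retain the entropy-correction term $-\nabla_\phi \log q_\phi(c|x,a^\ast)$ in $g_\phi$: it is precisely this term that cancels the prior-gradient term $\nabla_\theta \log \pi_\theta(c|x)$ contributed by $g_\theta$. Dropping or mislabeling either term would break the cancellation. Since both the objective value and the per-sample gradient estimator coincide under the substitution, the equivalence $\mathcal{L}_{\theta,\phi}(x,a^\ast)=\mathcal{L}_\theta(x,a^\ast)$ follows.
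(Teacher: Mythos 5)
Your proof is correct and follows essentially the same route as the paper's: substitute the tied parameterization $q_\phi=\pi_\theta$ into the two ELBO gradient estimators, note that the log-ratio term vanishes, and observe that the score term $\nabla_\theta\log\pi_\theta(c|x)$ in $g_\theta$ cancels the entropy-correction term $-\nabla_\theta\log\pi_\theta(c|x)$ in $g_\phi$, leaving exactly the Jensen gradient $g_1+g_2$ of Eqn~\eqref{eq:gradient}. Your additional bookkeeping—the explicit objective-level identity and the chain-rule justification that the tied gradient is $g_\theta+g_\phi$—is a welcome elaboration of what the paper compresses into ``a simple manipulation,'' but it is the same argument.
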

\begin{proof}
    When $q_\phi=\pi_\theta$, we have
    \begin{align*}
        g_\phi = \nabla_\theta \log \pi_\theta(c|x)\cdot \log \pi_\theta(a^\ast|x,c) - \nabla_\theta \log \pi_\theta(c|x)
    \end{align*}
    Adding this gradient to $g_\theta$, a simple manipulation shows that the aggregate gradient is equivalent to the gradient of the lower bound defined in Eqn~\eqref{eq:gradient}.
\end{proof}

With a parametric approximate posterior $q_\phi$, ELBO is more expressive than the Jensen's lower bound and allows for a tighter approximation to the marginal log likelihood. However, this also introduces additional complexity of having to learn the approximate posterior distribution. In our applications of interest, training a posterior model of a large size can be a major computational overhead. In practice, for example, \citet{hoffman2024training} approximates the posterior via a few steps of MCMC and avoids learning such a distribution. We take a different approach with a similar motivation: by tightening the lower bound with multiple samples, we also avoid the need for a parametric approximate posterior.

\subsection{Reinforcement learning}

We show that there is a close connection between the lower bound formulation and the expected return  maximization objective in RL \citep{sutton1998} for a single terminal reward. Concretely, we will see how the lower bound objectives are closely related to a \emph{conditional expectation trick} that produces a RL policy gradient estimate with lower variance. First, we show that (up to a log transform) RL optimizes for the same target as the lower bound objectives, given the indicator reward.

\begin{lemma} (\textbf{RL optimality is equivalent to maximum likelihood optimality}) \label{lemma:equivalence-optimality}
When $r(x,y)=\mathds{1}_{\{a=a^\ast\}}$, the optimal policy to the RL objective is equivalent to the optimal policy of the maximum likelihood objective Eqn~\eqref{eq:likelihood-objective}.
\end{lemma}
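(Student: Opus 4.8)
The plan is to show that, under the indicator reward $r(x,y)=\mathds{1}_{\{a=a^\ast\}}$, the (unregularized) expected-return objective of RL coincides, as a function of $\theta$, with the marginal likelihood $\pi_\theta(a^\ast|x)$, and then to invoke the monotonicity of $\log$ to conclude that the two optimization problems share the same maximizers.

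First I would write out the RL expected return explicitly. Using the decomposition $y=(c,a)$ together with the generative process $c\sim\pi_\theta(\cdot|x)$, $a\sim\pi_\theta(\cdot|x,c)$ from Eqn~\eqref{eq:cot-generative}, the expectation factorizes and the inner expectation of the indicator collapses analytically:
\begin{align*}
\mathbb{E}_{y\sim\pi_\theta(\cdot|x)}\!\left[\mathds{1}_{\{a=a^\ast\}}\right]
&= \mathbb{E}_{c\sim\pi_\theta(\cdot|x)}\,\mathbb{E}_{a\sim\pi_\theta(\cdot|x,c)}\!\left[\mathds{1}_{\{a=a^\ast\}}\right] \\
&= \mathbb{E}_{c\sim\pi_\theta(\cdot|x)}\!\left[\pi_\theta(a^\ast|x,c)\right]
= \pi_\theta(a^\ast|x),
\end{align*}
where the last equality is the marginalization $\sum_c \pi_\theta(c|x)\,\pi_\theta(a^\ast|x,c)=\pi_\theta(a^\ast|x)$. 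Hence the RL objective with the indicator reward equals the evidence itself. (This analytic collapse of the expectation over $a$ is precisely the conditional-expectation step foreshadowed in the surrounding text.)

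The remaining step is to relate this to Eqn~\eqref{eq:likelihood-objective}, which maximizes $\log\pi_\theta(a^\ast|x)$. Since $\log$ is strictly increasing, composing it with the objective leaves the set of maximizers unchanged, i.e.\ $\arg\max_\theta \pi_\theta(a^\ast|x)=\arg\max_\theta \log\pi_\theta(a^\ast|x)$. This is exactly the ``up to a log transform'' caveat, and it gives that any optimal policy for the RL objective is optimal for the maximum likelihood objective, and conversely.

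I expect no genuine technical obstacle here; the entire content is the marginalization identity that turns the indicator-reward return into the marginal likelihood. The two points worth flagging in the write-up are (i) that the lemma concerns the unregularized expected return (the case $\beta=0$ of Eqn~\eqref{eq:rlhf}), since a nonzero KL penalty would otherwise shift the optimum away from the pure maximum-likelihood solution, and (ii) that ``equivalent'' should be read at the level of maximizer sets rather than objective values, so it is the monotone-transform argument that carries the equivalence rather than any equality of the two objectives.
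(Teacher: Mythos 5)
Your proposal is correct and follows essentially the same route as the paper's proof: both reduce the RL objective to the identity $\mathbb{E}\left[\mathds{1}_{\{a=a^\ast\}}\right]=\pi_\theta(a^\ast|x)$ and then observe that the two objectives differ only by a monotone $\log$ transform, hence share the same maximizers. Your write-up simply makes explicit the marginalization over $c$ and the caveats (unregularized objective, equivalence of maximizer sets) that the paper leaves implicit.
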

\begin{proof}
    The conclusion follows from the fact that $\mathbb{E}\left[\mathds{1}_{\{a=a^\ast\}}\right]=\pi_\theta(a^*|x)$. Hence the two objectives differ by a $\log$ operation and yield the same optimal solution.
\end{proof}

\begin{algorithm}[t]
    \begin{algorithmic}[1]
        \STATE \textbf{INPUT} policy $\pi_\theta$
        \WHILE { $t=0,1,2...$ }
        \STATE (i) For each sampled prompt $x$, collect $n$ generations $(y_i)_{i=1}^n$ and extract their corresponding chain-of-thoughts $(c_i)_{i=1}^n\sim \pi_\theta(\cdot|x)$.
        \STATE (ii) Evaluate $\pi_\theta(a^*|x,c_i)$ with a forward pass; calculate gradients $\nabla_\theta \log \pi_\theta(c_i), \nabla_\theta \log \pi_\theta(a^*|x,c_i)$ with backprop.
        \STATE (iii) Update $\theta$ with $n$-sample  gradient estimate Eqn~\eqref{eq:gradient} or its multi-sample variant Eqn~\eqref{eq:multisample-gradient}.
        \ENDWHILE
    \end{algorithmic}
        \caption{JEPO: Jensen's evidence lower bound policy optimization (for both single-sample and multi-sample lower bounds)}\label{algo:online}
\end{algorithm}

Assuming access to $n$ i.i.d. trajectories $(y_i)_{i=1}^n\sim \pi_\theta(\cdot|x)$, we start with the classic RL policy gradient with leave-one-out baseline (for example, RLOO \citep{rloo})
\begin{align}
\label{rloo-grad}
g_\text{vanilla-pg} = \frac{1}{n} \sum_{i=1}^n \nabla_\theta\log\pi_\theta(y_i|x) \cdot \left(\mathds{1}_{\{a_i=a^*\}} - w_i\right), 
\end{align}
where $w_i=\frac{1}{n - 1}\sum_{j\neq i} \mathds{1}_{\{a_j = a^*\}}$ is the leave-one-out baseline. Now, we present a new policy gradient estimate of the RL objective with guaranteed variance reduction, which is also feasible to implement with sample-based learning.

\begin{definition}[\textbf{A variance-reduced RL policy gradient estimate}]
Given $n$ trajectories $(y_i)_{i=1}^n$ from a single prompt $x$, we define $g_\text{var-reduced-pg}$ as
\begin{align}
\label{cond-rloo-grad}
\frac{1}{n}\sum_{i=1}^n  \nabla_\theta \log\pi_\theta(c_i)\cdot (\pi_\theta(a^*|c_i) - \tilde{w}_i) +  \nabla_\theta \pi_\theta(a^*|c_i),
\end{align}
where $\tilde{w}_i = \frac{1}{n - 1}\sum_{j\neq i} \pi_\theta(a^* | c_j)$ is the leave-one-out baseline akin to similar constructs in the lower bound case.
\end{definition}

We show that the variance-reduced policy gradient estimate is closely related to the classic gradient estimate via the conditional expectation trick.
\begin{lemma} \label{cond-grad-lemma} (\textbf{Conditional expectation})
Under the same assumption as Lemma \ref{lemma:equivalence-optimality} and denoting $a\sim \pi_\theta(\cdot|c)$ as the sampling process $a_i\sim \pi_\theta(\cdot|c_i)$, it holds that $g_\text{var-reduced-pg}$ is a conditional expectation of $g_\text{vanilla-pg}$
\begin{align}
\label{cond-grad-eqn}
g_\text{var-reduced-pg} = \mathbb{E}_{a\sim \pi_\theta(\cdot|c)} \left[g_\text{vanilla-pg} \;|\; (c_i)_{i=1}^n\right].
\end{align}
\end{lemma}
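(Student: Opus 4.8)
The plan is to verify Eqn~\eqref{cond-grad-eqn} by taking the conditional expectation of each term in $g_\text{vanilla-pg}$ over the answer-sampling step $a_i \sim \pi_\theta(\cdot|c_i)$, holding the chain-of-thoughts $(c_i)_{i=1}^n$ fixed. The central observation, already invoked in Lemma~\ref{lemma:equivalence-optimality}, is that $\mathbb{E}_{a_i\sim\pi_\theta(\cdot|c_i)}\left[\mathds{1}_{\{a_i=a^*\}} \mid c_i\right] = \pi_\theta(a^*|c_i)$. This is exactly the conditional expectation trick: we replace the noisy Bernoulli indicator by its conditional mean, which is the quantity $\pi_\theta(a^*|c_i)$ appearing in $g_\text{var-reduced-pg}$.

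First I would expand $\log\pi_\theta(y_i|x) = \log\pi_\theta(c_i|x) + \log\pi_\theta(a_i|x,c_i)$ using the generative decomposition in Eqn~\eqref{eq:cot-generative}, so that $\nabla_\theta\log\pi_\theta(y_i|x) = \nabla_\theta\log\pi_\theta(c_i|x) + \nabla_\theta\log\pi_\theta(a_i|x,c_i)$. Substituting this into Eqn~\eqref{rloo-grad} splits $g_\text{vanilla-pg}$ into a chain-of-thought score term and an answer score term, each multiplied by the centered reward $(\mathds{1}_{\{a_i=a^*\}} - w_i)$. Next I would take the conditional expectation term by term. For the chain-of-thought term, $\nabla_\theta\log\pi_\theta(c_i|x)$ is deterministic given $(c_i)_{i=1}^n$, so only the reward factor is averaged: $\mathbb{E}[\mathds{1}_{\{a_i=a^*\}}\mid c_i] = \pi_\theta(a^*|c_i)$ and, since the answers are drawn independently across $i$, $\mathbb{E}[w_i \mid (c_j)_j] = \frac{1}{n-1}\sum_{j\neq i}\pi_\theta(a^*|c_j) = \tilde{w}_i$. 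This reproduces the first summand $\nabla_\theta\log\pi_\theta(c_i)\cdot(\pi_\theta(a^*|c_i)-\tilde{w}_i)$ of Eqn~\eqref{cond-rloo-grad}.

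The remaining piece is the answer score term $\nabla_\theta\log\pi_\theta(a_i|x,c_i)\cdot(\mathds{1}_{\{a_i=a^*\}} - w_i)$. Here I would argue that the $w_i$ contribution vanishes in conditional expectation: conditioned on $(c_i)_{i=1}^n$, the score $\nabla_\theta\log\pi_\theta(a_i|x,c_i)$ (a function of $a_i$ only) is independent of $w_i$ (a function of $\{a_j\}_{j\neq i}$), and $\mathbb{E}_{a_i}[\nabla_\theta\log\pi_\theta(a_i|x,c_i)\mid c_i]=\sum_{a_i}\nabla_\theta\pi_\theta(a_i|x,c_i)=\nabla_\theta 1 = 0$ by the standard score-function identity. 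For the $\mathds{1}_{\{a_i=a^*\}}$ part, I would compute $\mathbb{E}_{a_i}[\nabla_\theta\log\pi_\theta(a_i|x,c_i)\,\mathds{1}_{\{a_i=a^*\}}\mid c_i] = \nabla_\theta\log\pi_\theta(a^*|x,c_i)\cdot\pi_\theta(a^*|c_i) = \nabla_\theta\pi_\theta(a^*|c_i)$, using $\pi_\theta(a^*|c_i)\nabla_\theta\log\pi_\theta(a^*|c_i)=\nabla_\theta\pi_\theta(a^*|c_i)$. This matches the second summand of Eqn~\eqref{cond-rloo-grad} exactly, completing the identity.

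I expect the main subtlety to be the bookkeeping around the baseline $w_i$ in the answer-score term: one must carefully use both the cross-sample independence of the answers (so that $w_i$ and $a_i$ decouple given the fixed chain-of-thoughts) and the vanishing-expectation property of the score function, rather than sweeping $w_i$ under the rug. Everything else is an application of the tower property and the elementary identity $\pi\nabla\log\pi=\nabla\pi$; the result is that $g_\text{var-reduced-pg}$ is an unbiased conditional expectation of $g_\text{vanilla-pg}$, which by the law of total variance immediately yields the promised variance reduction.
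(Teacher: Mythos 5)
Your proof is correct and takes essentially the same route as the paper's: both compute the conditional expectation term by term using the decomposition $\nabla_\theta\log\pi_\theta(y_i|x)=\nabla_\theta\log\pi_\theta(c_i|x)+\nabla_\theta\log\pi_\theta(a_i|x,c_i)$, the identity $\mathbb{E}[\mathds{1}_{\{a_i=a^*\}}\mid c_i]=\pi_\theta(a^*|c_i)$, cross-sample independence of the answers given the chain-of-thoughts, the zero-mean property of the score function, and $\pi\nabla_\theta\log\pi=\nabla_\theta\pi$. The only difference is bookkeeping: you group terms by score component (chain-of-thought score vs.\ answer score) whereas the paper groups by reward component (indicator term vs.\ leave-one-out baseline term), and the resulting algebra is identical.
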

We note that without the leave-one-out baselines $\tilde{w}_i,\tilde{w}_i$, the conclusion Eqn~\eqref{cond-grad-eqn} is straightforward as both estimates Eqn~\eqref{cond-rloo-grad} and Eqn~\eqref{rloo-grad} become plain averages of i.i.d. terms. Now, using Lemma \ref{cond-grad-lemma}, we immediately see that the new gradient estimate yields smaller variance.
\begin{theorem} (\textbf{Variance reduction})
Under the same assumption as Lemma \ref{lemma:equivalence-optimality}, we have guaranteed variance reduction
\begin{align}
\label{var-red}
\mathbb{V}_{(y_i)_{i=1}^n\sim\pi_\theta(\cdot|x)} \left[g_\text{var-reduced-pg}\right] \le \mathbb{V}_{(y_i)_{i=1}^n\sim\pi_\theta(\cdot|x)} \left[g_\text{vanilla-pg}\right].
\end{align}
\end{theorem}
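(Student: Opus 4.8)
The plan is to invoke the law of total variance, with Lemma~\ref{cond-grad-lemma} supplying the crucial identification of $g_\text{var-reduced-pg}$ as a conditional expectation of $g_\text{vanilla-pg}$. First I would fix the probabilistic decomposition dictated by the generative process in Eqn~\eqref{eq:cot-generative}: the randomness in the full batch $(y_i)_{i=1}^n=(c_i,a_i)_{i=1}^n$ factors into the chain-of-thought draws $(c_i)_{i=1}^n$ and, conditionally on these, the answer draws $a_i\sim\pi_\theta(\cdot|x,c_i)$. I would condition on $\mathcal{F}\coloneqq\sigma\big((c_i)_{i=1}^n\big)$ and regard $g_\text{vanilla-pg}$ as the ambient random vector whose variance (read as a covariance matrix, i.e.\ in the Loewner order, or equivalently coordinatewise) we seek to bound.

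Second, by Lemma~\ref{cond-grad-lemma} the conditional expectation $\mathbb{E}\big[g_\text{vanilla-pg}\mid\mathcal{F}\big]$ is exactly $g_\text{var-reduced-pg}$. Being a conditional expectation, it is automatically $\mathcal{F}$-measurable, consistent with the fact that every term of Eqn~\eqref{cond-rloo-grad} — the score $\nabla_\theta\log\pi_\theta(c_i)$, the likelihood $\pi_\theta(a^*|c_i)$, the baseline $\tilde{w}_i$, and $\nabla_\theta\pi_\theta(a^*|c_i)$ — depends on the chain-of-thoughts alone, with no residual dependence on the sampled answers. This is precisely the content of the conditional-expectation (Rao--Blackwellization) trick: it replaces the answer-dependent indicators $\mathds{1}_{\{a_i=a^\ast\}}$ by their conditional means $\pi_\theta(a^*|c_i)$, removing all variance attributable to answer sampling while preserving the mean.

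Third, I would apply the law of total (co)variance to $g_\text{vanilla-pg}$,
\begin{align*}
\mathbb{V}\left[g_\text{vanilla-pg}\right]
= \mathbb{E}\left[\mathbb{V}\left[g_\text{vanilla-pg}\mid \mathcal{F}\right]\right]
+ \mathbb{V}\left[\mathbb{E}\left[g_\text{vanilla-pg}\mid \mathcal{F}\right]\right],
\end{align*}
and identify the second term with $\mathbb{V}\left[g_\text{var-reduced-pg}\right]$ via the previous step. The first term is an expectation of conditional covariance matrices, hence positive semi-definite; dropping it yields the claimed inequality Eqn~\eqref{var-red}.

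I do not anticipate a genuine obstacle, since the substance is carried entirely by Lemma~\ref{cond-grad-lemma}; the only care needed is bookkeeping. Specifically, I would (a) interpret $\mathbb{V}[\cdot]$ for a vector-valued gradient as a covariance matrix and state the inequality in the Loewner order — the scalar law of total variance applies verbatim to each coordinate, and the matrix version follows because $\mathbb{E}\left[\mathbb{V}\left[\,\cdot\mid\mathcal{F}\right]\right]$ is an average of positive semi-definite matrices — and (b) note that conditioning on $\mathcal{F}$ is exactly conditioning on $(c_i)_{i=1}^n$, so the variances on both sides of Eqn~\eqref{var-red}, taken over the full law of $(y_i)_{i=1}^n$, are the correct objects to compare.
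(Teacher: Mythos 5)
Your proposal is correct and takes essentially the same route as the paper: both arguments rest entirely on Lemma~\ref{cond-grad-lemma} identifying $g_\text{var-reduced-pg}=\mathbb{E}\left[g_\text{vanilla-pg}\mid (c_i)_{i=1}^n\right]$, and then decompose $\mathbb{V}\left[g_\text{vanilla-pg}\right]$ so that the cross term vanishes and a nonnegative residual term can be dropped. The only cosmetic difference is that you invoke the law of total (co)variance by name, stated in the Loewner/coordinatewise sense, while the paper derives the same identity by direct computation, $\mathbb{V}\left[g_\text{vanilla-pg}\right]=\mathbb{E}\left[\|g_\text{vanilla-pg}-g_\text{var-reduced-pg}\|^2\right]+\mathbb{V}\left[g_\text{var-reduced-pg}\right]$, reading the variance as $\mathbb{E}\left[\|\cdot-\mathbb{E}[\cdot]\|^2\right]$ (the trace of the covariance matrix).
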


The proof is provided in Appendix~\ref{appendix:var}. Putting $g_\text{var-reduced-pg}$ from Eqn~\eqref{cond-rloo-grad} and the gradient estimate of the Jensen's lower bound (Eqn~\eqref{eq:gradient}) side-by-side, we identify intriguing similarities. Both gradient estimates employ two terms that update either the chain-of-thought component $\pi_\theta(\cdot|x)$ or the answer component $\pi_\theta(\cdot|x,c)$, with the only subtle difference being the extra log-transform needed for obtaining the Jensen lower bound. This alludes to the fact that the lower bound gradient has intrinsic  built-in variance reduction.

\subsection{Optimizing Jensen's lower bound with regularization is optimizing a special ELBO}

When optimzing the lower bound objectives, we also apply the KL regularization motivated from the regularized RL formulation (Eqn~\eqref{eq:rlhf}). Though this combination seems ad-hoc, we will see that optimizing such an hybrid objective is in fact equivalent to maximizing a special ELBO.

Incorporating the regularization into the lower bound formulation, we have an aggregate objective
\begin{align}
     \mathcal{L}_{\theta}(x,a^*) - \beta\mathbb{KL}(\pi_\theta,\pi_\text{ref}).\label{eq:regularized-elbo}
\end{align}
If we refine the regularization a little more: instead of the generation level regularization, we apply regularization at the chain-of-thought: $\mathbb{KL}_c\left(\pi_\theta,\pi_\text{ref}\right)\coloneqq\mathbb{E}_{c\sim\pi_\theta(\cdot|x)}\left[\log\frac{\pi_\theta(c|x)}{\pi_\text{ref}(c|x)}\right]$, then the resulting aggregate objective can be interpreted in a more coherent way, as an ELBO to a concrete generative process.
\begin{lemma}\label{lemma:regularized}
    (\textbf{Regularized lower bound as an ELBO to a special generative process}) Assume a generative process $c\sim \pi_\text{ref}(\cdot|x),a\sim \pi_\theta(\cdot|x,c)$ that defines a marginal distribution $p_{\pi_\theta,\pi_\text{ref}}(a|x)\coloneqq \sum_c \pi_\text{ref}(c|x)\pi_\theta(a^*|x,c)$. Then the regularized objective $\mathcal{L}_{\theta}(x,a^*) - \mathbb{KL}_c(\pi_\theta,\pi_\text{ref})$ is a lower bound to the log likelihood $\log p_{\pi_\theta,\pi_\text{ref}}(a|x)$.
\end{lemma}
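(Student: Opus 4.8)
The plan is to prove the bound by a single application of Jensen's inequality, exactly as in the derivation of the unregularized Jensen's bound in Eqn~\eqref{eq:simple-lower-bound}, with one extra ingredient: an importance-weighting step that corrects for the mismatch between the generative prior $\pi_\text{ref}(\cdot|x)$ and the sampling distribution $\pi_\theta(\cdot|x)$ under which both $\mathcal{L}_\theta$ and $\mathbb{KL}_c$ are defined. First I would rewrite the marginal likelihood as an expectation under $\pi_\theta(\cdot|x)$ by inserting $\pi_\theta(c|x)$ as an importance weight,
\begin{align*}
\log p_{\pi_\theta,\pi_\text{ref}}(a^*|x) = \log \sum_c \pi_\theta(c|x)\,\frac{\pi_\text{ref}(c|x)}{\pi_\theta(c|x)}\,\pi_\theta(a^*|x,c) = \log \mathbb{E}_{c\sim\pi_\theta(\cdot|x)}\!\left[\frac{\pi_\text{ref}(c|x)}{\pi_\theta(c|x)}\,\pi_\theta(a^*|x,c)\right],
\end{align*}
which is now in the form $\log \mathbb{E}_{c\sim\pi_\theta(\cdot|x)}[\,\cdot\,]$ to which Jensen's inequality applies.

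Next I would pull the concave $\log$ inside the expectation to obtain
\begin{align*}
\log p_{\pi_\theta,\pi_\text{ref}}(a^*|x) \geq \mathbb{E}_{c\sim\pi_\theta(\cdot|x)}\!\left[\log\pi_\theta(a^*|x,c) + \log\frac{\pi_\text{ref}(c|x)}{\pi_\theta(c|x)}\right],
\end{align*}
and then split the right-hand side into its two summands: the first is exactly $\mathcal{L}_\theta(x,a^*)$ from Eqn~\eqref{eq:simple-lower-bound}, and the second is $-\mathbb{E}_{c\sim\pi_\theta(\cdot|x)}[\log(\pi_\theta(c|x)/\pi_\text{ref}(c|x))] = -\mathbb{KL}_c(\pi_\theta,\pi_\text{ref})$. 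This recovers the claimed bound. Equivalently, I would observe that the combined integrand is precisely the general ELBO of Eqn~\eqref{eq:eval} instantiated with the prior $\pi_\text{ref}(c|x)$ in place of $\pi_\theta(c|x)$ and with the inference distribution chosen as $q_\phi(c|x,a^*)\coloneqq\pi_\theta(c|x)$; the claim is then an immediate specialization of the standard ELBO inequality, which is itself just Jensen applied to this generative process.

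There is no genuine technical difficulty here, so I expect the ``main obstacle'' to be purely bookkeeping: keeping track of which distribution plays the role of prior versus variational posterior (here $\pi_\text{ref}$ is the fixed prior and $\pi_\theta$ is the learned variational distribution, the reverse of the roles in Section~\ref{sec:algo}), and ensuring the importance weight $\pi_\text{ref}(c|x)/\pi_\theta(c|x)$ is well defined. The latter requires only the standard support/absolute-continuity condition $\pi_\theta(\cdot|x)\ll\pi_\text{ref}(\cdot|x)$ that already makes $\mathbb{KL}_c(\pi_\theta,\pi_\text{ref})$ finite — automatically satisfied for full-support language-model policies — in which case the rewrite above is exact. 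As with Eqn~\eqref{eq:simple-lower-bound}, the bound is tight precisely when the Jensen term is almost surely constant, i.e. when $\pi_\theta(a^*|x,c)\,\pi_\text{ref}(c|x)/\pi_\theta(c|x)$ does not vary over $\mathrm{supp}(\pi_\theta(\cdot|x))$, equivalently when $\pi_\theta(\cdot|x)$ matches the posterior of the reference generative process.
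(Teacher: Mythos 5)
Your proof is correct and is essentially the paper's own argument: the paper invokes ``the same derivation as the regular ELBO'' with prior $\pi_\text{ref}(c|x)$ and then sets $q_\phi=\pi_\theta$, which is exactly your importance-weighting-plus-Jensen computation written out explicitly (and you even note this equivalence yourself). The only difference is presentational --- the paper states the bound via $\max_\phi$ over the general ELBO before specializing, while you instantiate $q_\phi=\pi_\theta$ from the start --- so there is nothing substantive to flag.
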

\begin{proof}
    Applying the same derivation as the regular ELBO, the log likelihood $\log p_{\pi_\theta,\pi_\text{ref}}(a|x)$ is lower bounded as
    \begin{align*}
         &\geq \max_\phi\  \mathbb{E}_{c\sim q_\phi(\cdot|x,a^\ast)}\left[\log \pi_\theta(a^\ast|x,c) - \log \frac{q_\phi(c|x,a^*)}{\pi_\text{ref}(c|x)} \right]  \\
        &\geq_{(a)} \mathbb{E}_{c\sim \pi_\theta(\cdot|x)}\left[\log \pi_\theta(a^\ast|x,c) - \log \frac{\pi_\theta(c|x)}{\pi_\text{ref}(c|x)} \right] \\
        &= \mathcal{L}_{\theta}(x,a^*) - \mathbb{KL}_c(\pi_\theta,\pi_\text{ref}),
    \end{align*}
where inequality (a) is due to choosing $q_\phi=\pi_\theta$ and the last equality is by definition. Hence the proof is complete.
\end{proof}

Note that the aggregate objective Eqn~\eqref{eq:regularized-elbo} can also be optimized via stochastic gradient ascent with standard estimates. We just need to add an additional term associated with the KL regularization, to the original gradient estimate to $\mathcal{L}_\theta(x,a^*)$ defined in Eqn~\eqref{eq:gradient}. An example of such a gradient estimate is the following
\begin{align*}
    \log \frac{\pi_\theta(c|x)}{\pi_\text{ref}(c|x)}\nabla_\theta \log \pi_\theta(c|x),c\sim \pi_\theta(\cdot|x).
\end{align*}
Though our lower bound interpretation (Lemma~\ref{lemma:regularized}) is under a regularization only on the chain-of-thought, in practice, we still apply the full generation level regularization following common practice \citep{christiano2017deep,ziegler2019fine,ouyang2022training}.

\subsection{Practical trade-offs comparing JEPO vs. RL}

As discussed earlier, JEPO does not require an external verifiable reward, instead, it can be understood as applying the indicator reward $r(x,y)=\mathds{1}_{\{a=a^\ast\}}$. In practice, this can be instantiated as a strict string match \verb|float(answer == gt_answer)|. However, such a reward function will likely induce false negatives, as semantically equivalent generations might be vastly different strings. In practice, a more lenient match is typically applied to  remove more false negatives. For example, for math problems \citep{hendrycks2021measuring,yue2024harp}, usually programmtic checks are implemented to check for equivalence of two short expressions, such that e.g., \verb|pi|  and \verb|3.1415926| might be considered equivalent.

More formally, consider a general reward function $r(x,y)=\text{match}(a,a^*)$ calculated as a binary match between $a$ and $a^*$. We can rewrite the RL objective as $\mathbb{E}[\text{match}(a,a^*)]$. In order to adapt JEPO to the new RL ojbective, we need to work  with the equivalent set $\mathcal{A}\coloneqq \{a|\text{match}(a,a^*)=1\}$ as well as  quantities such as the set probability  $\pi_\theta(\mathcal{A}|x,c)\coloneqq\sum_{a\in \mathcal{A}} \pi_\theta(a|x,c)$. Note that this probability 
 reduces to $\pi_\theta(a^*|x,c)$ in case we use exact match. In general, computing such probabilities is expensive since we need to enumerate all $a\in\mathcal{A}$ if inverting the match function is feasible at all. As a result, it is challenging to adapt JEPO to generic match function or reward function.

In summary, when a good verifiable reward is available (Sympy vs. string for certain math datasets, see semi-verifiable experiments in Section~\ref{sec:exp-semi}), online RL is at an advantage. There are also cases where good rewards are not readily available and JEPO is a reasonable algorithm. An example is where the ground truth answer takes a rather long form, e.g., see unverifiable experiments in Section~\ref{sec:exp-unverifiable}.

\section{Implementation details}\label{sec:implement}

We explain the implementation details of the JEPO algorithm in this section. We highlight a few key technical elements for the practical implementation, which we have found to be important in getting the best performance.

\paragraph{Formatting penalty}
We find it useful to have an additional RL loss with the reward as $r_\text{format}(x,y)=-p$ if $y$ does not follow the formatting requirement (that the identifier phrase \emph{the final answer is} is in $y$) and zero otherwise. We find that this generally helps stabilize the training process. This is especially useful for small models (8B), where, under temperature sampling, it can often not follow instructions strictly. For large models (70B), we also found that its formatting might be inconsistent after multi-epoch training. We find a value of $p=10$ suffices while smaller values tend to make the training less stable due to weaker penalties.

\paragraph{Per-sequence log probs} For the \emph{log-ave-exp} operation that defines the lower bound in Eqn~\eqref{eq:simple-lower-bound}, it is important to apply the per-sequence log probs without averaging over the sequence length. Concretely, the bound is calculated as
\begin{align*}
     \log \left(\frac{1}{n}\sum_{j=1}^n \sum_{t< |a^\ast|} \pi_\theta(a_t^\ast|x,c_j,a_{<t}^\ast)\right),
\end{align*}
where $|a^*|$ denotes the sequence length of the ground truth $a^*$. It is important \emph{not} to average the sequence level log probs $\log \sum_{t< |a^\ast|} \pi_\theta(a_t^\ast|x,c_j,a_{<t}^\ast)$ with a length factor of $1/|a^\ast|$ as suggested in other contexts \citep{grinsztajn2024averaging,shao2024deepseekmath}. The reason is that the algorithm seeks to make $a^*$ more likely, and the sequence level log probs comply with this goal. The length normalization can modify the objective landscape significantly especially when $|a^*|$ is large. For example, JEPO algorithm does not work well on the proof data when length normalization is applied.

\paragraph{Advantage normalization} Both the baseline RL and JEPO apply advantage post-processing, following common practice in prior work \citep{schulman2017,baselines}. For example, in the multi-sample JEPO, the raw advantage for the $i$-th generation is 
\begin{align*}
    A_i = \log \left(\frac{1}{n}\sum_{j=1}^n \pi_\theta(a^\ast|x,c_j)\right) - \tilde{v}_i,
\end{align*}
where $\tilde{v}_i$ is the control variate. A further normalization is applied to the advantage $\tilde{A}_i = \text{clip}(A_i / \text{std}\left(A\right), -1, 1)$ such that the outcome $\tilde{A}_i$ is applied in the actual update. Advantage normalization is especially important for JEPO because its raw advantage takes a wider range of values, compared to RL with binary reward.

\paragraph{Weighted supervised learning loss} We also introduce a weighting coefficient for the supervised loss $\beta_\text{sup}\geq0$, which we found useful for ablations. We observe that small values $0\sim 10^{-2}$ tends to work for short-answer applications (e.g., MATH) while a large value $\sim 1$ is important for semi long-form data (e.g., numina and numina-proof), in order to place more weight on the supervised learning loss.

\paragraph{KL-regularization} In early investigations, we found it useful to have a KL regularization at a very small coefficient $\beta=10^{-3}$. The regularization helps prevent formatting collapse, and also prevents the policy from drifting too much in case the updates are noisy \citep{ziegler2019fine,ouyang2022training}. 
\begin{figure*}[t]
\centering
\includegraphics[width=\textwidth]{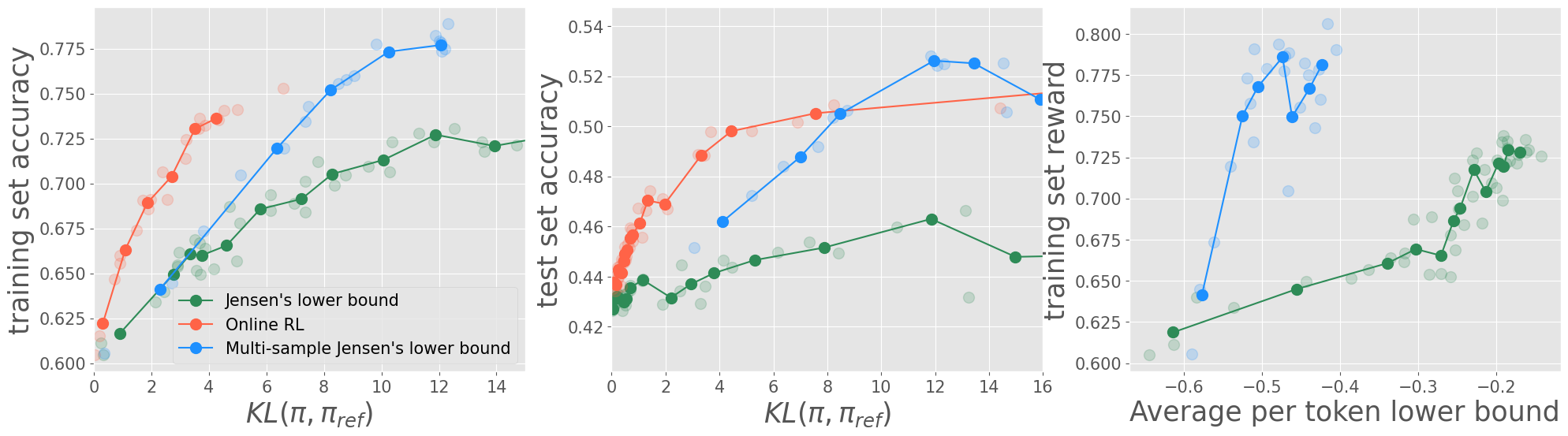}
\caption{\small{Verifiable data experiments with MATH. We compare three baselines: online RL with access to the oracle Sympy-based reward and JEPO. In the left plot, we monitor the reward on the training dataset. Online RL obtains the best training time trade-off, followed by multi-sample lower bound and the single-sample lower bound; In the middle plot, we monitor the evaluation on a test set during training. Multi-sample lower bound and online RL obtains similar performance; In the right plot, we graph training reward against the lower bound objectives, averaged over training tokens. The two signals bear positive correlations overall and multi-sample lower bound yields better correlations.}}
\label{figure:onpolicy-steps}
\end{figure*}

Put together, given $n$ samples, the JEPO update is
\begin{align*}
    &\frac{1}{n} \sum_{i=1}^n \left(\left(\tilde{A}_i + \tilde{A}_i^{\text{(ref)}}\right) \nabla_\theta \log \pi_\theta(c_i|x)   \right) \\ &+ \beta_\text{sup} \nabla_\theta  \log \left(\frac{1}{n}\sum_{i=1}^n  \pi_\theta(a^\ast|x,c_i)\right) \\ &- \beta \nabla_\theta \mathbb{KL}\left(\pi_\theta(\cdot|x), \pi_\text{ref}(\cdot|x)\right),
\end{align*}
where $\tilde{A}_i^{\text{ref}}$ is the normalized advantage for the formatting penalty.
The advantage normalization and weighting coefficient $\beta_\text{sup}$ make it such that the ultimate update optimizes for a weighted lower bound with resemblance to to $\beta$-VAE \citep{higgins2017beta}. We encourage readers to reference against the online RL implementation in Appendix~\ref{appendix:hypers} to understand its similarities with the JEPO algorithm.

The JEPO loss is only applied to generations with correct format, otherwise, the loss is masked out. The formatting advantage update is applied to all generations. Also, we find that the sequence level normalization with a factor of $1/\left(|c_i|+|a^*|\right)$ or $1/|c_i|$ does not make a significant difference in performance \citep{shao2024deepseekmath,liu2025understanding}.

\section{Related work}

\paragraph{Training with unverifiable data} A natural way to generalize RL training to unverifiable data is to make use of LLM feedback, e.g., \emph{LLM-as-judge} uses LLM to assess the quality of the generated response \citep{lee2023rlaif,guo2024direct,yuan2025selfrewardinglanguagemodels}. However, despite its conceptual simplicity, LLM-as-judge might not produce reliable assessment for domain-specific or long-form data \citep{lightman2023let,petrov2025proof}. When optimizing against judge scores, it is also more likely to over-optimize \citep{gao2023scaling}. As a result, in this work we apply LLM-as-judge only for short-form evaluations and not for training.

Closely related to our work is the concurrent VR-CLI (verifiable reward with completion likelihood improvement) \citep{gurung2025learning} where they apply log probs of golden generations as reward. Using our terminology, their approach resembles the first part of the gradient in Eqn~\eqref{eq:lower-bound-gradient} of the Jensen's evidence lower bound. Without a SFT-like component, their update does not optimize for the marginal likelihood only partially. JEPO also applies the multi-sample technique to tighten the lower bound, achieving better empirical performance, which we will demonstrate in Section~\ref{sec:exp-verifiable}.

\paragraph{Likelihood-based scoring}
Prior work showcased the utility of Likelihood-based scoring in filtering of chain-of-thought \citep{zelikman2024quiet,ruan2025reasoning}. The algorithms mostly proceed in an iterative fashion akin to expectation-maximization \citep{moon1996expectation}, which in theory can also maximize the evidence of the desirable final answers. Complementary to such work, since we extend the training process to fully online RL settings, we forgo the need of variational posteriors which allows for training on unverifiable data at scale. We also demonstrate performance gains beyond short-form answers, which were the main focus of prior work.

\paragraph{Chain-of-thought as latent variable modeling} The idea of casting optimizing chain-of-thought as latent variable modeling is not new. Previously, \citet{hoffman2024training} proposed an algorithm motivated by maximizing ELBO to tackle reasoning problems. Such an algorithm also draws close connections to prior work \citep{zelikman2022star,gulcehre2023reinforced,singh2023beyond,yuan2023scaling} all of which resemble a hybrid offline-online RL training loop, where they alternate between sampling and filtering via a reward. They also  have an interpretation as EM algorithmic variants \citep{moon1996expectation}.

Despite the appeal of the full ELBO formulation, it is rarely implemented in practice due to the requirement of learning the posterior distribution. Indeed, despite the formulation of \citet{hoffman2024training} they ended up approximating the posterior with MCMC, which effectively made use of an explicit reward to filter samples. This also marks a key difference from our work - we do not apply any explicit reward scoring throughout our algorithmic design and practical implementation. In addition, \citet{hu2024unveiling} has proposed a more systemic hierarchical latent variable modeling view of chain-of-thought. Similar to our motive, \citet{sordoni2024joint} optimized an ELBO inspired objective for prompt selection, where they did not resort to an external reward. 

\paragraph{Evidence lower bound and RL} The connections between evidence lower bound and RL has been extensively studied in both the variational inference \citep{ranganath2016hierarchical,blei2017variational} and RL community \citep{levine2018reinforcement,o2020making}. In the RL literature, much of the variational inference view has been used to better interpret and improve existing algorithms with much focus on the goal-conditional problems, where a single reward is assigned at the end of a trajectory. Such a setting is quite akin to the RLHF case, where a sequence terminates with a single reward \citep{andrychowicz2017hindsight,eysenbach2020rewriting,tang2021hindsight}. Our formulation also naturally incorporates the tighter multi-sample lower bound \citep{burda2015importance,rainforth2018tighter} as special cases, which has seen little adoption in prior RL literature.

\section{Experiments with verifiable data}\label{sec:exp-verifiable}

We start by comparing JEPO against RL baselines on verifiable data. We focus on the mathematical reasoning dataset MATH \citep{hendrycks2021measuring} where the prompt $x$ asks the model a mathematical question with a short form answer $a^*$. We study the two algorithmic variants proposed in this work: the JEPO defined through the gradient estimate in Eqn~\eqref{eq:gradient} as well as its multi-sample variant Eqn~\eqref{eq:multisample-gradient}. As a strong baseline, we consider 
the online policy gradient RL algorithm which applies Sympy \citep{10.7717/peerj-cs.103} to automatically match the answers. The RL algorithm applies leave-one-out for variance reduction, as is commonly practiced \citep{rloo,shao2024deepseekmath}. Our main experiments are based on the 8B and 70B model from the Llama 3.1 model family \citep{dubey2024llama}. All algorithmic variants apply identical hyper-parameters such as learning rate, and that they all apply $n=4$ samples for gradient estimations, which we detail in Appendix~\ref{appendix:hypers}.

The RL baseline is at an advantage in this setting, since the reward is of high quality and is itself being used as evaluation signals too \citep{yue2024harp}. We do not compare with other baselines developed in prior work (e.g., \citep{hoffman2024training}) as they can be interpreted as variants of online RL algorithms with relatively minor algorithmic differences.

\begin{figure}[t]
\centering
\includegraphics[width=\linewidth]{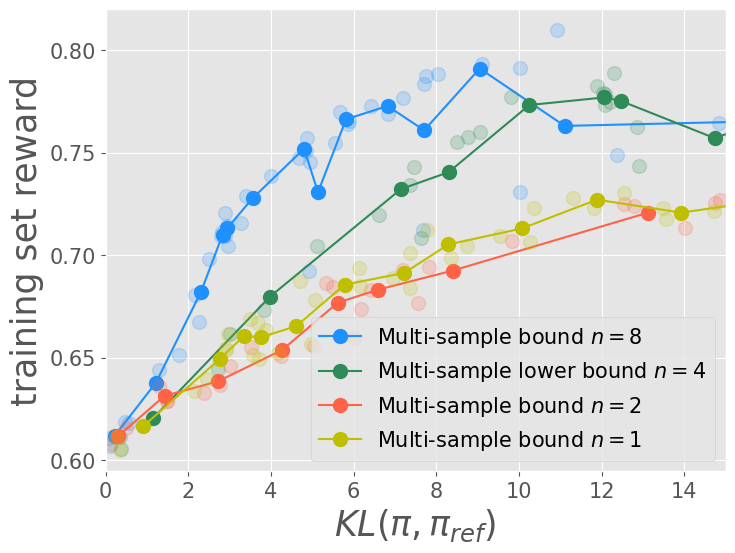}
\caption{\small{Ablation of number of samples $n$ for multi-sample lower bounds. As we increase the number of samples, the multi-sample lower bound seems to further improve the training-time efficiency. This corroborates the theoretical insight that as $n$ increases, the multi-sample lower bound objectives become tighter.}}
\label{figure:iwae-ablations}
\end{figure}

\subsection{Comparison on MATH}

During RL training, we use a reward of $r=1$ when there is an answer match and $r=0$ otherwise. Note that JEPO does not require access to such a reward, but we monitor the reward scores during training. Figure~\ref{figure:onpolicy-steps} left plot shows the training performance of all baselines. For the x-axis, we use the KL divergence $\mathbb{KL}(\pi_\theta,\pi_\text{ref})$ calculated over the training set. Following the practice in \citep{gao2023scaling}, we adopt the KL divergence as a certain measure of the optimization budget that the algorithm has consumed. Note that here all experiments are run with the same regularization coefficient $\beta=10^{-3}$ since it achieves a good trade-off for all algorithmic variants over all. 

\paragraph{Training performance} Figure~\ref{figure:onpolicy-steps} left plot shows that online RL achieves a good KL-performance trade-off on the training set. This is probably not a big surprise since online RL optimizes for the very same objective that we monitor here. In the meantime, JEPO enjoys reasonable performance: as the policy deviates from the reference policy, the reward performance improves despite not explicitly training for it (in theory JEPO optimizes for a hard string match rather than Sympy match). (2) the multi-sample JEPO obtains noticeably better performance than the one-sample lower bound baseline, despite using the same $n=4$ generations per update. We will ablate on the impact of parameter $n$ on the performance.

\paragraph{Evaluation} Figure~\ref{figure:onpolicy-steps} middle plot shows the evaluation performance on an held-out test set. We note that the reward on the training set is higher than the test set, because the model has been SFT'ed on on the training prompts. For evaluation, observe that the multi-sample lower bound method obtains similar performance as online RL, despite being outperformed during training. We conjecture that this is because online RL tends to overfit the training prompts more significantly, producing a high training reward that does not transfer as well to the evaluation time. This shows that even without training on the reward signal explicitly, JEPO can obtain a similar evaluation performance as online RL.

\paragraph{Statistical correlation between objectives} Figure~\ref{figure:onpolicy-steps} right plot graphs the training time reward against the lower bound objectives. If we consider the training reward as a ground truth objective to optimize for, we see that the multi-sample lower bound displays a stronger correlations between the surrogate objective and the ground truth. This corroborates with the observation that multi-sample lower bound tends to lead to better performance, compared to single-sample lower bound.

\subsection{Ablation study}

We now provide ablation results on a few important dimensions of the algorithm.

\begin{figure}[t]
\centering
\includegraphics[width=\linewidth]{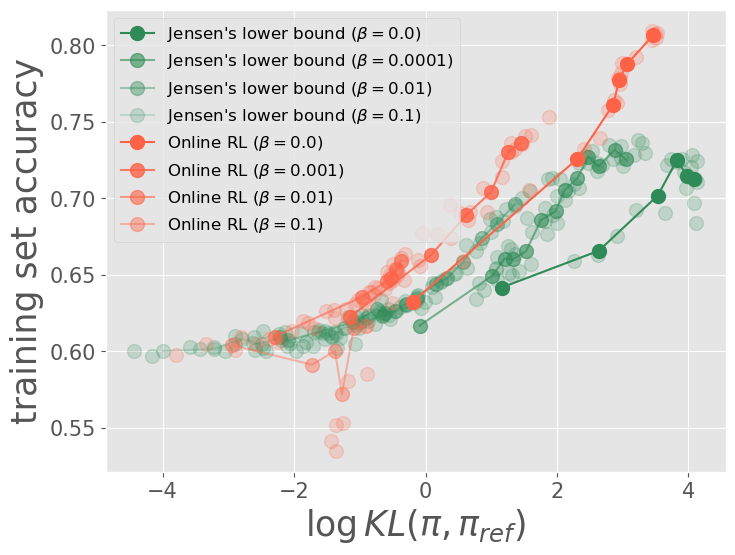}
\caption{\small{Ablation of regularization coefficient $\beta$. As $\beta$ increases, all algorithmic variants seem to obtain better efficiency in the training performance-KL divergence trade-off. However, strong regularization also prevents the policy from deviating much from the reference policy, preventing bigger training improvements.}}
\label{figure:kl-ablations}
\end{figure}

\paragraph{Multi-sample ablation on sample size $n$} We ablate on the number of sample $n$ used for constructing per gradient update. In theory, as $n$ increases, the multi-sample lower bound becomes tighter and asymptotically approaches the marginal likelihood objective (which is equivalent to the RL objective). We vary the sample size $n\in\{1,2,4,8\}$ and compare the performance. Figure~\ref{figure:iwae-ablations} shows that as $n$ increases, the algorithm becomes more KL-efficient: with a fixed budget on KL, the model obtains better performance. Intriguingly, we also observe a training performance akin to reward over-optimization \citep{gao2023scaling} - as the optimization progresses, the training reward drops slightly (for blue curve). We can interpret this as the result of the fact that JEPO does not optimize for the same indicator matching function as the reward we monitor.

\paragraph{Regularization ablation} We investigate the impact of the regularization coefficient $\beta\in\{0,10^{-3},10^{-2},10^{-1}\}$. Figure~\ref{figure:kl-ablations} shows the training performance of the single-sample lower bound vs. online RL. One observation is that as $\beta$ increases, the trade-off efficiency for both algorithms improves - however, in general the algorithm also makes less deviation from the reference policy, hence leading to less improvement for a fixed training steps.

\begin{figure}[t]
\centering
\includegraphics[width=\linewidth]{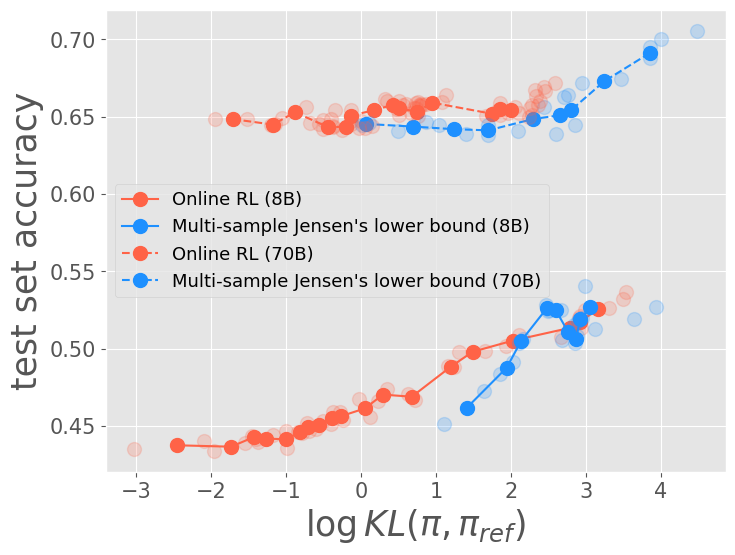}
\caption{\small{Ablation of model size (8B vs. 70B). We find that the multi-sample JEPO is fairly competitive against the online RL algorithm in the 70B scale. Both algorithm traces out a similar KL-performance trade-off, with multi-sample JEPO obtaining a slightly better performance given a similar compute budget as online RL.}}
\label{figure:size-ablations}
\end{figure}

\begin{figure*}[t]
\centering
\includegraphics[width=\textwidth]{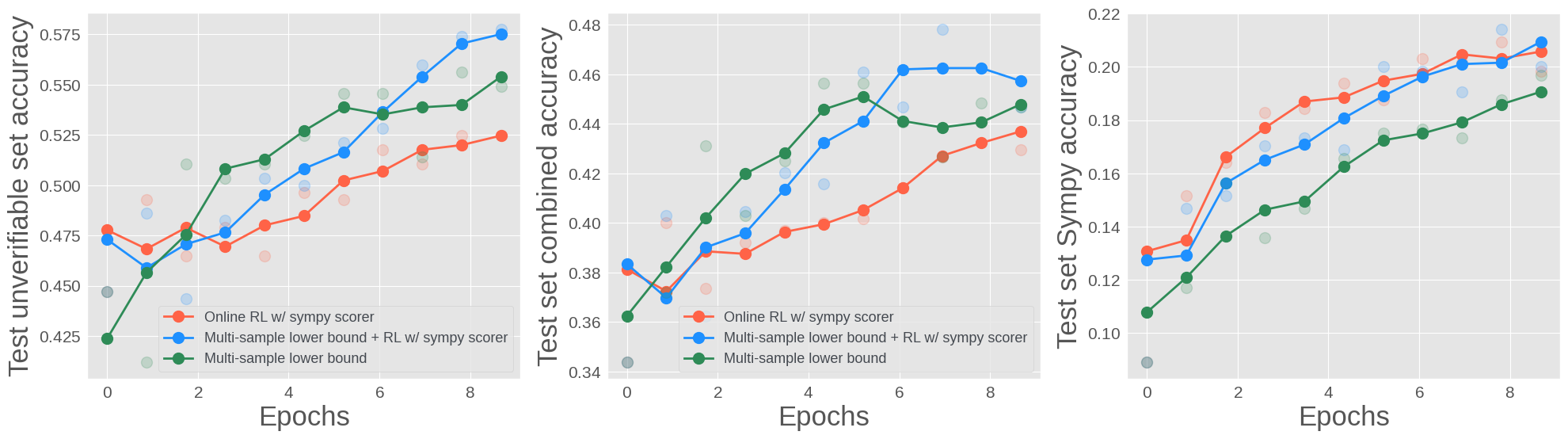}
\caption{\small{Evaluation comparison of training 70B models on semi-verifiable Numina dataset. We show evaluation results during the course of training. Left plot shows the combined accuracy on the unverifiable subset (about 40\%) of the test set; middle plot shows the combined accuracy on the full test set; right plot shows the Sympy score on the full test set. While JEPO progresses more slowly on the Sympy scores compared to online RL, it gains on the combined accuracy; the combined algorithm seems to achieve the best of both worlds.}}
\label{figure:numina-comparison}
\end{figure*}

\paragraph{Scaling up model size} Since the multi-sample JEPO appears more competitive, we compare it against the online RL in the 70B case. Figure~\ref{figure:size-ablations} shows that the JEPO obtains competitive performance against online RL in terms of the KL-performance trade-off. With roughly the same amount of compute budget, we find that the JEPO seems to drift further from the reference policy, hence extending the trade-off curve to a performance of $70\%$ test set accuracy, which outperforms online RL modestly.

\paragraph{Supervised loss} We find that a low value of $\beta_\text{sup}$ generally works better for the JEPO algorithms. The speculation is that when $\beta_\text{sup}$ is large, the supervised loss encourages the policy to place weights on the ground truth $a^*$ despite that the chain-of-thought $c$ has low quality. This leads to overfitting the training set, in a more severe way than online RL. This is because the JEPO supervised learning loss incentivizes the model to directly memorize $a^*$ given any context $(x,c)$, by maximizing the likelihood $\log \pi_\theta(a^*|x,c)$.

Interestingly, we will show that with long-form data, large values of $\beta_\text{sup}$ work generally better, since the risk of overfitting is less severe.

 \section{Experiments with semi-verifiable data}
\label{sec:exp-semi}

We now consider semi-verifiable data where a good proportion of the dataset contains answers which are not easily verifiable. We focus on a post-processed Numina dataset \citep{numina_math_datasets} where prompts are mathematical questions and ground truth answers are partly verifiable. For instance, one example of the ground truth answer is the whole expression: {\tt $\forall x \in \mathbb{R}, x^2 + (a-1)x + 1 \geq 0$}. Given a model generated answer, it is hard to verify whether it is equivalent to the above expression without case-specific parsing; often time, such parsing results in false negatives. See Appendix~\ref{appendix:hypers} for details on how we post-process the dataset and data splits.

\paragraph{RL baseline and reward} For the RL baseline, we apply the Sympy reward as introduced in the previous section. Because the dataset contains answers which are hard to verify, the reward is effectively only applicable to a subset of the data. The default training set contains about 22k examples. We estimated at least 40\% of such examples cannot be verified by the automatic scorer. We consider online RL with such reward as a baseline, as it has access to a highly specialized verifiable reward but only applicable to a subset of the data.

\paragraph{Combining JEPO and RL baseline} We also compare with an algorithm that combines the loss function of JEPO and RL baseline with the Sympy reward. When we sample $n$ generations from a single prompt, and if none of the generation obtains a positive score (note this does not mean that the example is necessarily unverifiable), we apply the JEPO loss; otherwise, we apply the baseline RL loss. This allows for a dynamic combination of two losses, and still leverages the whole dataset.

\paragraph{Evaluation} As with the trainging set, the held-out test set contains both verifiable and unverifiable examples, which we evaluate in two ways: (1) Sympy reward $r_\text{sympy}(a,a^*)\in\{0,1\}$, which generally underestimates the true accuracy when ground truth is semi-verifiable; (2) Sympy combined with LLM-as-judge $r_\text{combined}(a,a^*)$, which combines two sources of scores 
\begin{align*}
    r_\text{combined}(a,a^*) \coloneqq r_\text{sympy}(a,a^*) + r_\text{llm}(a,a^*)\mathds{1}_{\{r_\text{sympy}(a,a^*)=0\}}.
\end{align*}
The LLM-as-judge score $r_\text{llm}(a,a^*)$ is also binary: it is based on a 5-time majority voted decision of a prompted 70B instruction-tuned model \citep{dubey2024llama}. Though imperfect, we observe that LLM-as-judge reasonably mitigates some false negatives caused by rigid Sympy scoring.
Importantly, we reiterate that we do not train on such combined scores - they are used for evaluations only.

\subsection{Comparison on Numina}

Unless otherwise stated, we will experiment with the multi-sample algorithm given its performance gains in Section~\ref{sec:exp-verifiable}. Below, Figure~\ref{figure:numina-comparison} shows the evaluation performance comparing the RL baseline, JEPO and their combined algorithm. Since the Numina dataset is more challenging, we experiment throughout with 70B models.

\paragraph{Sympy scoring evaluation}
Figure~\ref{figure:numina-comparison} 
 right plot shows the evaluation accuracy using the Sympy score. Overall, all algorithms make steady progress as the training progresses. However, since online RL baseline trains with the same reward signal, it achieves slight acceleration compared to JEPO. The combined algorithm achieves a similar rate of progress with the Sympy scores on the test set.

Due to the abundance of symbolic expressions as ground truth answers in the Numina dataset, here the Sympy reward is a much more specialized scoring method than e.g., string match compared to the MATH case. This partly explains why the online RL baseline is quite competitive, as it also trains on the very same signal. Note this experimental result corroborates the trade-off discussion in Section~\ref{sec:connections}.

\paragraph{Combined scoring evaluation}
Figure~\ref{figure:numina-comparison} left plot and middle plot shows the combined accuracy which alleviates some false negatives due to the Sympy scoring. As seen from the overall metrics, the accuracy increases by about 25\% compared to the Sympy scores. The left plot shows the performance on the unverifiable test subset (40\% of the test set) while the middle plot shows the full set. We observe that both JEPO and the combined algorithm achieves faster rate of progress and asymptotes to slightly better performance than the online RL baseline with this combined metric, especially on the unverifiable subset. Interestingly, note that by training on verifiable rewards, online RL can also make progress on the unverifiable test set.

Though the Sympy scoring is quite specialized, it is only applicable to a subset of the full Numina training set. Meanwhile, JEPO can leverage the full dataset, despite with less specialized signals. The combined algorithm seems to achieve the best of both worlds.

\begin{figure}[t]
\centering
\includegraphics[width=\linewidth]{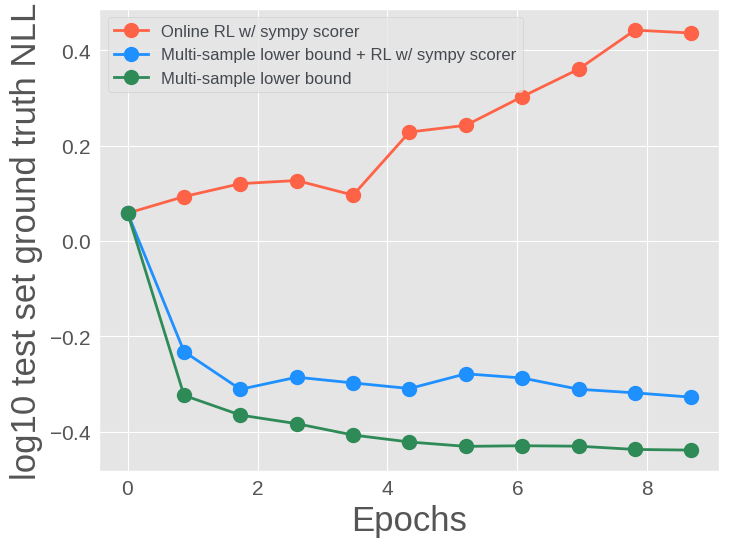}
\caption{\small{Test set proxy NLL evaluation for training on the numina dataset. We evaluate the proxy NLL of the trained models on the numina test set, approximated with $n=4$ samples lower bound defined in Eqn~\eqref{eq:eval}. Both JEPO and the combined algorithm sees improvement in the NLL (lower the better), while online RL does not improve on test set NLL. This hints at different solutions found by the online RL and JEPO algorithms, despite similar improvement trend in the sampling based evaluations.}}
\label{figure:numina-nll}
\end{figure}

\subsection{Ablation study}

We discuss a few additional ablations on the Numina dataset.

\paragraph{Test set negative likelihood: lower the better} We further evaluate the proxy negative log likelihood (NLL) that the trained model produces on test set, computed via the $n$-sample lower bound
\begin{align*}
    \text{proxy-NLL}(\pi_\theta) = -\mathbb{E}\left[\log \left(\frac{1}{n}\sum_{i=1}^n \log \pi_\theta(a^*|x,c_i)\right)\right]
\end{align*}
where the expectation is under $(c_i)_{i=1}^n\sim \pi_\theta(\cdot|x), (x,a^*)\sim\mathcal{D}_\text{test}$, following common practice \citep{burda2015importance,ruan2025reasoning}. Figure~\ref{figure:numina-nll} shows such proxy NLL during training, where we see a different pattern for the online RL baseline and JEPO. For JEPO, the proxy NLL decreases over time. We expect such a result because JEPO optimizes for the same objective on the training set, and before overfitting, we expect improvement on the test set. 

Meanwhile, maybe surprisingly, online RL does not make progress on the test set NLL. The combined algorithm is in between the two extremes. There are good reasons for online RL not to make progress on test set NLL. Particularly, for each ground truth answer in the dataset $a^*$, the Sympy scorer defines a sizable collection of correct answers $\mathcal{A}=\{a: r_\text{sympy}(a,a^*)=1\}$ whose aggregate probability $\pi_\theta(\mathcal{A}|x)$ increases under online RL (evidenced by test set accuracy improvement in Figure~\ref{figure:numina-comparison} right plot). In other words, online RL might not improve the proxy NLL of a particular $a^*$ (defined through the dataset) inside $\mathcal{A}$.

The above observation implies that the policy found by online RL and JEPO can produce different answers to the same question. It is also suggestive of how reward-based RL post-training can change the calibration behavior of likelihood-based models \citep{achiam2023gpt}.

\paragraph{Comparison with SFT baseline on golden chain-of-thought} To assess another option to improve semi-verifiable performance, we carry out another comparison against a SFT baseline, which trains on the golden chain-of-thought found in the source dataset \citep{numina_math_datasets}. We observe performance improvements across evaluation metrics as well, though generally under-performing RL. See Appendix~\ref{appendix:semi-verifiable} for full results.

\section{Experiments with unverifiable data}
\label{sec:exp-unverifiable}

Finally, we experiment with unverifiable data, where the full dataset has long-form ground truth and whose equivalence against another solution cannot be easily verified with hard-coded programs. We consider a post-processed Numina-proof, extracted from the original Numina dataset. The proof often contains multiple sentences or paragraphs, without a final short-form answer as in MATH or the verifiable subset of Numina.

\begin{figure}[t]
\centering
\includegraphics[width=\linewidth]{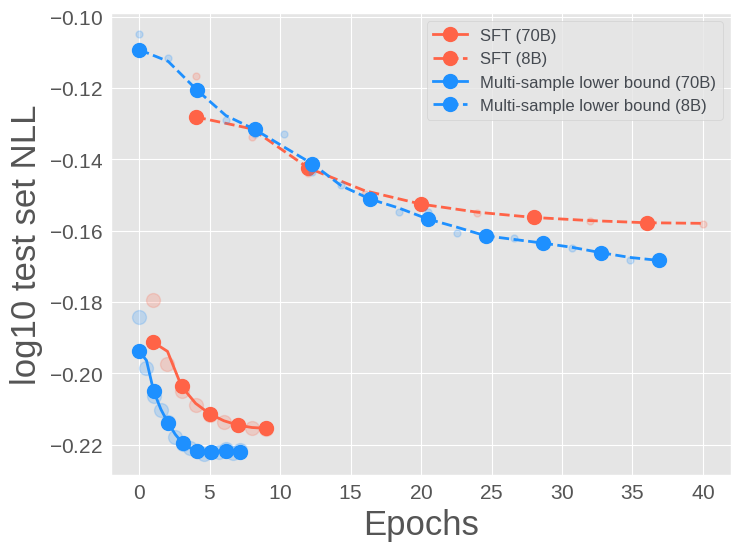}
\caption{\small{Test set proxy NLL evaluation for training on the unverifiable Numina-proof dataset. For JEPO outperforms the SFT baseline at the same data budget (measured in epochs) and achieves asymptotically better test performance.}}
\label{figure:numina-nll-proof}
\end{figure}

\paragraph{Baselines} Since the ground truth is long-form and cannot be verified easily, we do not have a RL baseline with verifiable reward. Instead, SFT on the raw dataset $(x,a^*)$ is a reasonable baseline. Through a few ablations, we also compare with methods akin to VR-CLI \citep{gurung2025learning}, which corresponds to the REINFORCE part of the single-sample lower bound gradient in Eqn~\eqref{eq:lower-bound-gradient}.

\paragraph{Evaluation} We evaluate NLL on the test set, akin to the ablations in Section~\ref{sec:exp-semi}. We do not carry out sampling based evaluations as long-form answers are hard to assess even for frontier models \citep{petrov2025proof}.

\subsection{Comparison on Numina-proof}

As main experiments, we compare JEPO with SFT. Note that we always started with instruction-tuned models \citep{dubey2024llama} and the SFT baseline can be understood as a continued SFT. We show the curve after an initial transient phase where the test set NLL drops significantly for all runs, which can be attributed to that the modes learn to format answer correctly. 

Figure~\ref{figure:numina-nll-proof} shows the test set NLL comparison between SFT and JEPO, with both 8B and 70B models. At both scales, JEPO outperforms SFT with test set NLL at the same training data epoch. Meanwhile, JEPO also achieves asymptotically better NLL than SFT.

\subsection{Ablation study}

\begin{figure}[t]
\centering
\includegraphics[width=\linewidth]{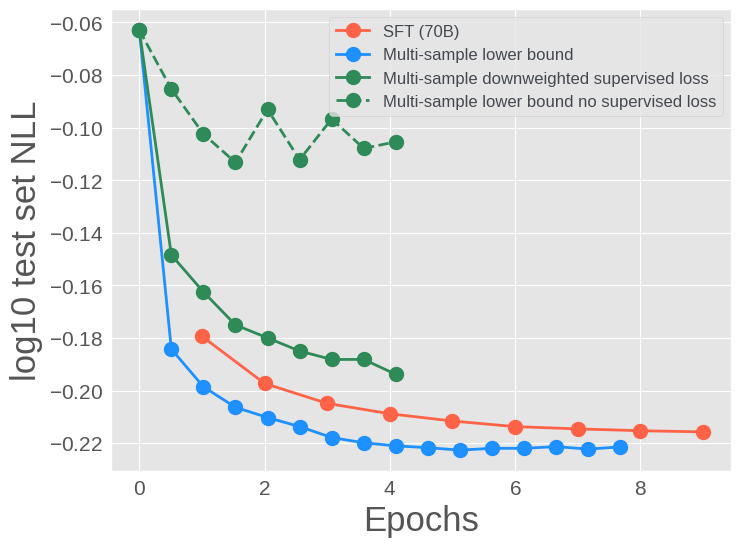}
\caption{\small{Comparison of different baselines on numina-proof test set NLL, across various algorithmic variants, with the 70B model. We observe that the supervised component of the JEPO loss plays a key role at learning efficiency and achieving good asymptotic performance.}}
\label{figure:numina-proof-ablation}
\end{figure}

To understand the role of each loss component, we carry out a few additional comparisons. Recall that JEPO update contains two parts: a REINFORCE component, whose single-sample variant is akin to VR-CLI \citep{gurung2025learning}; and a supervised loss component. We compare with a variant where the supervised loss is down-weighted ($\beta_\text{sup}=0.01$) and another where it is removed ($\beta_\text{sup}=0$).

Figure~\ref{figure:numina-proof-ablation} shows the comparison on the test set NLL. We see that by downweighting the supervised loss, JEPO makes much less progress on the test NLL given the same training epochs. Specifically, when the supervised loss is removed ($\beta_\text{sup}=0$), test NLL also seems to plateau at a worse level. Interestingly, this contrasts the observation in MATH experiments (Section~\ref{sec:exp-verifiable}) where small values of $\beta_\text{sup}$ work better. We speculate that the one difference is that nature of the chain-of-thoughts differs: for MATH or general short-form mathematical QA, the chain-of-thought details solution steps and a final answer can be readily inferred. For long-form data, the chain-of-thought tends to be a high-level outline, and it still takes extra effort to produce the full answer (e.g., proof). For the latter case, the supervised learning loss is useful.

\section{Conclusion, discussions and limitations}

We propose JEPO, a generic training paradigm scaling RL to unverifiable data, without the need for external verifiable rewards. We focus on the case where the reward is computed by matching a model generated solution 
against a dataset ground truth. We heavily draw on the probabilistic inference formulation that views chain-of-thought as latent variable. Bypassing the modeling complexity required for full ELBO, we  propose to multi-sample Jensen's evidence lower bound for scalable training. We show competitive performance on a wide array of datasets, ranging from verifiable  data like short-form math problems to unverifiable data like proof.

Possible directions for future research include studying the impact that various loss components (e.g., the REINFORCE and the supervised loss) have on overfitting; more organic ways to combine verifiable rewards and JEPO; and ways to scale JEPO in the form of meta-thought \citep{jaech2024openai,xiang2025towards} or to pre-training \citep{ruan2025reasoning}.

\bibliography{your_bib_file}
\bibliographystyle{plainnat}

\clearpage
\onecolumn

\begin{appendix}

\section{A review of the graphical model perspective}\label{appendix:graph}

We make a more extended discussion about the graphical model shown in Figure~\ref{figure:cotgraph}. 

\paragraph{Probabilistic inference with a learnable prior} Figure~\ref{figure:cotgraph}(a) shows the generic structure for probabilistic inference with a learnable prior, with latent variable $z$ and observable $o$. Here, $\theta$ controls both the prior and observation generation:
\begin{align*}
    z\sim p_\theta(\cdot), o\sim p_\theta(\cdot|c).
\end{align*}
The inference parameter $\phi$ denotes a the posterior inference distribution $q_\phi(z|o)$ that seeks to approximate the true posterior $p_\theta(z|o)\coloneqq\frac{p_\theta(c)p_\theta(o|c)}{\sum_{c'} p_\theta(c')p_\theta(o|c')}$. Together, they can form an ELBO that lower bounds the marginal log likelihood \citep{blei2017variational}
\begin{align*}
    \log p_\theta(o) \geq \underbrace{\mathbb{E}_{z\sim q_\theta(\cdot|o)}\left[\log p_\theta(o|z) + \log\frac{q_\phi(z|o)}{p_\theta(z)}\right]}_{\mathcal{L}_{\theta,\phi}(o)}.
\end{align*}
The right hand side $\mathcal{L}_{\theta,\phi}(o)$ can be optimized via stochastic gradient descent on the joint variable $(\theta,\phi)$. The lower bound is tight when the inference distribution is exactly the 
posterior $q_\phi(z|o)=p_\theta(z|o)$. A learnable prior refers to the fact that the prior distribution over latent $p_\theta(z)$ depends on $\theta$ too, while in much of the prior literature is is kept constant \citep{hoffman2016elbo,blei2017variational}. For the transition from generic probabilistic inference to our use case, a learnable prior is also fundamentally important.

\paragraph{Chain-of-thought with full ELBO} Figure~\ref{figure:cotgraph}(b) shows a direct mapping of the probabilistic inference structure to the case of optimizing chain-of-thought. Here, the chain-of-thought $c$ is the latent variable and the ground truth answer $a^*$ is the observable. A more precise mathematically definition would be to consider yet another binary optimality variable $O=\mathds{1}_{\{a=a^\ast\}}$ that determines whether the random variable answer $a$ is optimal. Here, we directly replace it with $a^*$ for notational simplicity.

If we further introduce a general conditional dependency on the prompt $x$, we arrive at the lower bound defined in Eqn~\eqref{eq:simple-lower-bound}
\begin{align*}
    \log \pi_\theta(a^*|x) \geq \underbrace{\mathbb{E}_{c\sim q_\theta(\cdot|x,a^*)}\left[\log \pi_\theta(a^*|x,c) - \log\frac{q_\phi(c|x,a^*)}{\pi_\theta(c|x)}\right]}_{\mathcal{L}_{\theta,\phi}(x,a^*)}.
\end{align*}

\paragraph{Chain-of-thought with Jensen's lower bound} In Figure~\ref{figure:cotgraph}(c), we replace the variational posterior $q_\phi$ by the prior distribution itself $\pi_\theta$. As discussed in the main paper, this looses the lower bound but make the optimization objective much simpler. See detailed derivations in Section~\ref{sec:algo}. We see there there appears to be a duplicated arrow that goes from $\theta$ to the latent variable $c$. We make such duplication to distinguish between the inference distribution (dashed arrow) and the generative distribution (solid arrow); in this particular case, we deliberately make the two distributions identical.

\paragraph{Jensen's lower bound with regularization} Finally, Figure~\ref{figure:cotgraph}(d) presents the graphical model for the case where a KL regularization is added to the Jensen's lower bound (see Lemma~\ref{lemma:regularized} for formal statements). In this case, the generative prior distribution is computed from the reference policy $\pi_\text{ref}$ parameterized by $\theta_\text{ref}$ which is kept fixed during training, while the rest of the distributions are still parameterized by $\theta$.

\section{Hyper-parameters and experimental settings} \label{appendix:hypers}

We experimented with the Llama 3.1 model of size 8B and 70B. All experiments are conducted with identical hyper-parameter settings: we always applied a batch size of $B=128$ prompts per 8B update and $B=64$ per 70B update, and sampled $n=4$ distinct generations per prompt. We found these hyper-parameters so that the model fits the GPU group memory as much as possible.

All training and evaluation sampling were conducted at a temperature of $\tau=1$ and with $\text{top-p}=1$. We did not conduct evaluation with alternative sampling configurations, in order to make training and evaluation more compatible. In our early study, deviating training sampling configuration from the above produces training instability.

For the verifiable experiments, a supervised fine-tuning on the training set was conducted to warm up the RL training, hence the beginning gap between training and test set accuracy. For the semi-verifiable and unverifiable experiments, we directly apply JEPO to the released checkpoints.

All updates were carried out with the Adam optimizer \citep{kingma2014adam} with learning rate $4\cdot 10^{-7}$. We found this learning rate by starting from a smaller value and increased the learning rate $2x$ at each iteration, to see if the training speeds up without hurting performance. We expect the results to be somehow robust to slight changes in learning rates.

Throughout all experiments (verifiable, semi-verifiable and unverifiable), for both training and evaluation, we provide system instructions that ask the model to generate a response with step-by-step solution, followed by a final conclusion phrased as \emph{the final answer is} followed by the answer predicted by the model. This is consistent with the prompt structure discussed for Llama models \citep{dubey2024llama,yue2024harp}.

\subsection{Training and evaluation dataset}

For the verifiable experiments on MATH, we train on the MATH training set with $7500$ examples and evaluate on the test set with $2500$ examples \citep{hendrycks2021measuring}. For the semi-verifiable experiments, we train with the post-processed Numina dataset \citep{numina_math_datasets}, where we split the post-processed 22k examples into a training set (90\%) and test set for evaluation. For unverifiable experiments, we extract the proof specific subset from the full Numina dataset, and split training and test set the same way as before.

\subsection{Numina dataset post-processing}

We use unverifiable proofs data from Numina 1.5 \citep{numina_math_datasets} for our experiments. We clean and filter the questions and their corresponding solutions using some simple regex heuristics. For example, we replace leading blanks, markdown headings like \texttt{\#\#}, prefixes like ``Problem:'' and ``Solution'', letter-digit combinations like ``A1'' / ``G5'' / ``ROU'', and trailing dots and blanks. After cleaning, we have 58088 proofs from the Numina dataset.

\subsection{Online RL baseline}

The online RL baseline is implemented akin to prior work such as RLOO \citep{rloo}, which can be understood as an on-policy special case of GRPO \citep{shao2024deepseekmath}. Specifically, given the verifiable reward, $r_i$, the advantage is computed with standard post-processing $A_i = \text{clip}\left((r_i - \bar{r}_{-i}) / \text{std}(r_i), -1, 1\right)$ where $\bar{r}_{-i}$ is the leave-one-out control variate. In sum, The update is
\begin{align*}
    &\frac{1}{n} \sum_{i=1}^n \left(A_i \cdot \nabla_\theta \log \pi_\theta(a_i|x,c_i)   \right) + A_i \cdot \nabla_\theta \log \pi_\theta(a_i|x,c_i)  - \beta \nabla_\theta \mathbb{KL}\left(\pi_\theta(\cdot|x), \pi_\text{ref}(\cdot|x)\right),
\end{align*}
where we intentionally separate the update to the chain-of-thought $c_i$ and sampled answer $a_i$. Juxtaposing the above update with the JEPO update, we note that the supervised loss $\nabla \log \pi_\theta(a^*|x,c_i)$ in JEPO echos the answer update $A_i \cdot \nabla_\theta \log \pi_\theta(a_i|x,c_i)$ in the RL algorithm. Formally, we show that the connection between the supervised loss and a variance reduced variant to the online RL update (Section~\ref{sec:connections}), see also Figure~\ref{figure:algorithm} for a summary of high-level comparison.

\section{Additional ablations on semi-verifiable data}\label{appendix:semi-verifiable}

\begin{figure}[t]
\centering
\includegraphics[width=\textwidth]{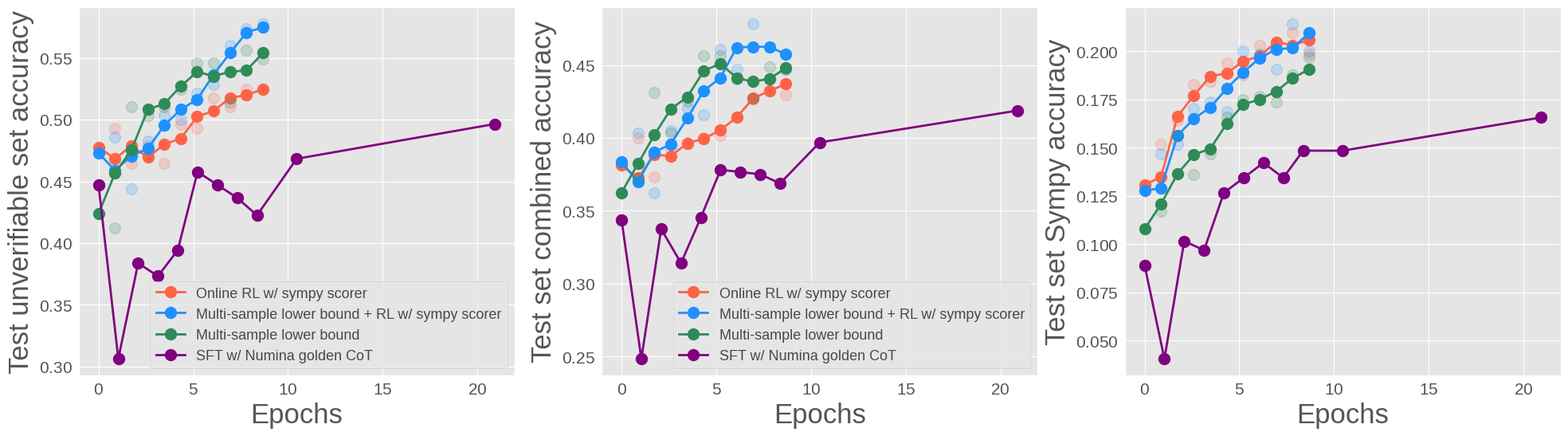}
\caption{\small{Additional comparison against a SFT baseline which trains on golden chain-of-thought data from the numina dataset. We show that the SFT baseline also improves upon various metrics, despite generally underperforming RL algorithms.}}
\label{figure:sft-semi}
\end{figure}

We show the comparison against a SFT baseline on golden chain-of-thought dataset in Figure~\ref{figure:sft-semi}. A few observations are in order: (1) SFT generally is not as good as the RL jobs, but it improves over time as we train more; (2) There is an initial drop in performance, which can be explained by the fact that the golden chain-of-thought does not conform to the familiar \emph{step-by-step} that the starting model has been post-trained with \citep{dubey2024llama}. Through SFT, the model needs to unlearn the step-by-step format and learns the more freeform hybrid format in the golden chain-of-thought data; (3) Asymptotically, SFT performs lower than RL runs.

\section{Proof of variance reduction for variance-reduced RL gradient estimate}\label{appendix:var}

Recall that we denote $(y_i)_{i=1}^n$ as the set of generations and $(c_i)_{i=1}^n$ be the set of  chain-of-thoughts generated from prompt $x$. We drop the dependency on prompt $x$
 wherever the context is clear. 
 
 \begin{proof}[Proof of Theorem~\ref{var-red}]
A direct computation shows that
\begin{eqnarray}
\begin{split}
\label{var-decomp}
 \mathbb{V}_{(y_i)_{i=1}^n\sim\pi_\theta(\cdot|x)} \left[g_\text{vanilla-pg}\right] =& \mathbb{E}_{(y_i)_{i=1}^n\sim\pi_\theta(\cdot|x)}\left[g_\text{vanilla-pg} - g_\text{var-reduced-pg} + g_\text{var-reduced-pg} - \mathbb{E}_{(y_i)_{i=1}^n\sim\pi_\theta(\cdot|x)}\left[g_\text{vanilla-pg}\right]\right] \\
 =& \mathbb{E}_{(y_i)_{i=1}^n\sim\pi_\theta(\cdot|x)}\left[\|g_\text{vanilla-pg} - g_\text{var-reduced-pg}\|^2\right] +  \mathbb{V}_{(y_i)_{i=1}^n\sim\pi_\theta(\cdot|x)} \left[g_\text{var-reduced-pg}\right],
\end{split}
\end{eqnarray}
where the cross-term vanishes due to Eqn~\eqref{cond-grad-eqn}. From this, Eqn~\eqref{var-red} follows immediately.
\end{proof}
\begin{proof}[Proof of Lemma~\ref{cond-grad-lemma}]
We begin by computing the conditional expectation $\mathbb{E}_{a\sim \pi_\theta(\cdot|c)} \left[g_\text{vanilla-pg} \; | \; (c_i)_{i=1}^n\right]$, which yields
\begin{eqnarray}
\begin{split}
 \underbrace{\mathbb{E}_{a\sim \pi_\theta(\cdot|c)}\left[\frac{1}{n}\sum_{i=1}^n \nabla_\theta\log\pi_\theta(y_i) \cdot \mathds{1}_{\{a_i=a^*\}} \; | \;  (c_i)_{i=1}^n\right]}_{\text{I}} + \underbrace{\mathbb{E}_{a\sim \pi_\theta(\cdot|c)}\left[\frac{1}{n}\sum_{i=1}^n \nabla_\theta\log\pi_\theta(y_i)\cdot \tilde{w}_i\right]}_{\text{II}}.
\end{split}
\end{eqnarray}
where we use the notation $a\sim \pi_\theta(\cdot|c)$ to indicate that each answer $a_i\sim \pi_\theta(\cdot|c_i)$ is i.i.d. sampled from its corresponding chain-of-thought. Expanding the first term $\text{I}$, we have
\begin{eqnarray}
\begin{split}
\label{term-1-expansion}
\text{I} =_{(a)}& \frac{1}{n}\sum_{i=1}^n\sum_a\big{(}\nabla_\theta\log\pi_\theta(a | c_i) + \nabla_\theta\log\pi_\theta(c_i)\big{)} \cdot \mathds{1}_{\{a=a^*\}}\cdot \pi_\theta(a|c_i)\\
=_{(b)}& \frac{1}{n}\sum_{i=1}^n \big{(}\nabla_\theta\pi_\theta(a^* | c_i) + \nabla_\theta\log\pi_\theta(c_i)\cdot\pi_\theta(a^*|c_i) \big{)},
\end{split}
\end{eqnarray}
where (a) is by definition of the expectation and $a\in\mathcal{A}$ denotes a dummy answer variable; (b) is due to the definition of the indcator function.
Now recalling the definition of $w_i$ as leave-one-out baseline to simplify term $\text{II}$:
\begin{eqnarray}
\begin{split}
\label{term-2-expansion}
\text{II} = \frac{1}{n}\sum_{i=1}^n \mathbb{E}_{a\sim \pi_\theta(\cdot|c)}\left[\nabla_\theta\log\pi_\theta(y_i)\cdot w_i \;|\; (c_i)_{i=1}^n\right]
= \frac{1}{n(n-1)}\sum_{i=1}^n \sum_{j\neq i}\mathbb{E}_{a\sim \pi_\theta(\cdot|c)}\left[\nabla_\theta\log\pi_\theta(y_i)\cdot \mathds{1}_{\{a_j=a^*\}} \; | \;  (c_i)_{i=1}^n\right].
\end{split}
\end{eqnarray}
Note we can explicitly compute each summand on the right hand side of Eqn~\eqref{term-2-expansion} as product of two conditional expectations, thanks to the fact that when $i\neq j$:
\begin{eqnarray}
\begin{split}
\label{cond-summand}
\mathbb{E}_{a\sim \pi_\theta(\cdot|c)}[\nabla_\theta\log\pi_\theta(y_i)\cdot \mathds{1}_{\{a_j=a^*\}} \; | \;  (c_i)_{i=1}^n] 
&=_{(a)} \big(\mathbb{E}_{a_i\sim\pi_\theta(\cdot|c_i)} [\nabla_\theta\log\pi_\theta(a_i | c_i) | c_i] + \nabla_\theta\log\pi_\theta(c_i)\big{)} \cdot \pi_\theta(a^*|c_j) \\
&=_{(b)} \nabla_\theta\log\pi_\theta(c_i) \cdot  \pi_\theta(a^*|c_j),
\end{split}
\end{eqnarray}
where (a) is due to the definition of the indicator function; (b) is based on the zero-mean property of score functions. Plugging Eqn~\eqref{cond-summand} into the right hand side of Eqn~\eqref{term-2-expansion}, we have
\begin{eqnarray}
\begin{split}
\label{term-2-continue}
\text{II} 
= \frac{1}{n}\sum_{i=1}^n \nabla_\theta\log\pi_\theta(c_i)\cdot \frac{1}{n-1}\sum_{j\neq i}\pi_\theta(a^*|c_j)
= \frac{1}{n}\sum_{i=1}^n \nabla_\theta\log\pi_\theta(c_i) \cdot \tilde{w}_i,
\end{split}
\end{eqnarray}
where we used the definition of $\tilde{w}_i$ from Eqn~\eqref{cond-rloo-grad}. Lastly, we combine Eqn~\eqref{term-1-expansion} and Eqn~\eqref{term-2-continue} and obtain
\begin{eqnarray}
\begin{split}
\text{I} + \text{II} = \frac{1}{n}\sum_{i=1}^n \bigg{(}\nabla_\theta\pi_\theta(a^* | c_i) + \nabla_\theta\log\pi_\theta(c_i) \cdot \big{(}\pi_\theta(a^*|c_i) - \tilde{w}_i \big{)}\bigg{)} = g_\text{var-reduced-pg}.
\end{split}
\end{eqnarray}
Thus we have concluded the proof of Lemma~\ref{cond-grad-lemma}.
\end{proof}

\end{appendix}

\end{document}